\newtheorem{definition}{Definition}
\newtheorem{theorem}{Theorem}
\newtheorem{corollary}{Corollary}
\newcommand{\ev}[1] {
    \text{E}[#1]
}
\begin{document}

%%%%%%%%%%%%%%%%%%%%%%%%%%%%%%%%%%%%%%%%%%%%%%%%%%%%%%

\title{Decomposing Hard SAT Instances with Metaheuristic Optimization}

\author{Daniil Chivilikhin \and Artem Pavlenko \and  Alexander Semenov*
\\
ITMO University, St. Petersburg, Russia\\
\href{mailto: alex.a.semenov@itmo.ru}{\texttt{ alex.a.semenov@itmo.ru}}}

\maketitle

%%%%%%%%%%%%%%%%%%%%%%%%%%%%%%%%%%%%%%%%%%%%%%%%%%%%%%

\begin{abstract}
\noindent \emph{In the article, within the framework of the Boolean Satisfiability problem (SAT), the problem of estimating the hardness of specific Boolean formulas w.r.t. a specific complete SAT solving algorithm is considered. 
Based on the well-known Strong Backdoor Set (SBS) concept, we introduce the notion of decomposition hardness (d-hardness). 
If $B$ is an arbitrary subset of the set of variables occurring in a SAT formula $C$, and $A$ is an arbitrary complete SAT solver , then the d-hardness expresses an estimate of the hardness of $C$ w.r.t. $A$ and $B$. 
We show that the d-hardness of $C$ w.r.t. a particular $B$ can be expressed in terms of the expected value of a special random variable associated with $A$, $B$, and $C$. 
For its computational evaluation, algorithms based on the Monte Carlo method can be used. 
The problem of finding $B$ with the minimum value of d-hardness is formulated as an optimization problem for a pseudo-Boolean function whose values are calculated as a result of a probabilistic experiment. 
To minimize this function, we use evolutionary algorithms. 
In the experimental part, we demonstrate the applicability of the concept of d-hardness and the methods of its estimation to solving hard unsatisfiable SAT instances.
}
\end{abstract}

%%%%%%%%%%%%%%%%%%%%%%%%%%%%%%%%%%%%%%%%%%%%%%%%%%%%%%%

\section{Introduction}

The Boolean satisfiability problem~(SAT) is an NP-complete combinatorial problem~\cite{Cook71} in its decision variant.
The NP-completeness of SAT, proved in Steven Cook's seminal work~\cite{Cook71}, gave birth to the modern theory of structural complexity of algorithms. 
A number of classical algorithms for solving SAT were proposed in the 1960s and have been used for a long time in automated theorem proving~\cite{DP1960,DPLL1962,Robins1965}. 
There are also a number of SAT solving algorithms that exploit local search methods and approaches, see e.g.~\cite{Selman1992,Gu1996,Schoning1999,Balint2012,Cai2021}. 
However, the widespread practical use of SAT solvers began with the advent of the CDCL (Conflict Driven Clause Learning) algorithm~\cite{MSS96,MSS1999}. 
Over the past 20 years, CDCL-based complete SAT solving algorithms have proven their efficiency in many practically important industrial applications. 
Currently, SAT solvers are successfully applied to combinatorial problems from hardware and software verification~\cite{BiereDAC99,BiereTACAS99,Kroening09,ABC-Misch10}, verification of neural networks~\cite{narod18-aaai,narod18-ijcai,jia20}, bioinformatics~\cite{LynceMS06}, cryptanalysis~\cite{CourtBard07,Bard09,SZBP11,SZOKI18}, combinatorics and Ramsey theory~\cite{KonLisitsa14,Heule16,Heule18}, and in many other areas.

Unfortunately, quite often in practice a SAT solver may work on some input formula for a long time, and there is no information about how much more time it needs to complete the work. 
Sometimes, minimal additional information (for example, substituting the value of some variable or variables) can radically change the situation, and the solver will complete the work in a few seconds. 
These effects are the consequences of the well-known heavy-tailed behavior phenomenon~\cite{GomesSabh2021}. 
And with this in mind, the problem of constructing meaningful estimates of how hard a particular formula is for a particular SAT solving algorithm $A$ becomes important.

There are quite a few papers on the topic of analytical estimates of the hardness of SAT solving algorithms on countable families of formulas, see e.g.~\cite{Alekhn02,benSasson2001,benSasson2004,Buss1998,bonet1998,Haken1985,Tse70,urquhart1995}. 
However, to our best knowledge, the problem of estimating the hardness of a specific formula w.r.t. a specific SAT solver is still far from its solution.

In this article, we propose a general approach to solving this problem.
The basis of this approach is the idea of obtaining hardness estimates not for the original formula, but for some decomposition of this formula. 
It is often easier to assess the complexity of a decomposition, since the formulas that make it up turn out to be much easier for the SAT solver in comparison with the original formula. 
In fact, this idea is inspired by the concept of the Strong Backdoor Set~\cite{Williams03} and the results of the paper~\cite{Ansoteg08}, in which the concept of backdoor hardness was implicitly formulated.
A set $B$ is a strong backdoor set~(SBS) for formula $C$ w.r.t. a polynomial-time algorithm $P$, if all $2^{|B|}$ subformulas derived from $C$ by substituting the values of variables from $B$ are solved by $P$.
And if $B$ is some SBS for formula $C$ w.r.t. a polynomial sub-solver $P$, then the following upper bound for $H(C)$ (hardness of $C$) holds: $H(C) \le poly(|C|) \cdot 2^{|B|}$, where $poly(\cdot)$ is some polynomial. 
Thus, we are interested in the SBS of the smallest cardinality (minimum SBS), which gives the lowest hardness estimate. 
The problem of finding the minimum SBS, however, is very difficult.

Based on the ideas of~\cite{Ansoteg08}, we introduce the concept of decomposition hardness (d-hardness): this is the running time of a complete SAT solver $A$ on formulas from some family obtained as a result of a decomposition of the original formula $C$. 
We show that d-hardness of formula $C$ w.r.t. an arbitrary complete SAT solver $A$ and a specific decomposition set $B$, $B \subseteq X$, can be expressed via expected value of a special random variable which is associated with $B$ (here, $X$ is the set of variables over which formula $C$ is specified). 
For each $B, B \subseteq X$, the d-hardness of $C$ w.r.t. $B$ gives an upper bound of the hardness of formula $C$ (similar to the case when $B$ is some SBS).
The estimation of d-hardness w.r.t. a concrete set $B$ can be done using the Monte Carlo method. 
Next, we propose to search for a set $B$ which gives a minimal d-hardness value with the help of metaheuristic algorithms used in pseudo-Boolean optimization.

This paper significantly extends the results previously published in conference proceedings \cite{CP2021} with the following unique contributions.
\begin{enumerate}
    \item We study the choice of solver performance metrics for d-hardness estimation. 
    In particular, we consider and compare d-hardness estimation based on execution time, the number of unit propagations, and conflicts.
    
    \item We show how to use the proposed decomposition approach to build an unsatisfiability proof comprised of a set of independent subproofs that can be checked in parallel.
    
    \item We draw a connection between d-hardness~\cite{CP2021} and the $\rho$-backdoor concept proposed in~\cite{AAAI2022}. Based on the idea of the $\rho$-backdoor, we suggest a new technique for d-hardness estimation based on taking into account simple subproblems in a specific decomposition. We show that this technique significantly increases the efficiency of d-hardness estimation. 
     
    \item We show how to use several decomposition sets $B$ simultaneously to achieve a lower d-hardness estimation, smaller solving time, and faster proof checking.
\end{enumerate}

This paper is structured as follows.
In Section~\ref{sec:preliminaries} we introduce basic concepts and theoretical facts used throughout the paper.
The motivation for the research behind this paper is formulated in Section~\ref{sec:motivation}.
Section~\ref{sec:d-hardness} introduces the decomposition hardness of SAT instances. 
Section~\ref{sec:d-hardness-search} %and~\ref{sec:metaheuristic-algorithms} 
describes how to formulate objective functions for using metaheuristic algorithms to search for sets $B$ with low d-hardness estimation and which metaheuristic algorithms we use for this purpose.
Section~\ref{sec:search-space-reduction} contains a description of search space reduction techniques which are crucially important for efficient search for decomposition sets. 
In Section~\ref{sec:additional} we describe modifications that make the d-hardness estimation more efficient.
% In Section~\ref{sec:rho-backdoors} we introduce the $\rho$-backdoor notion and show how we can use this notion to take into account simple subproblems, and as consequence, to speed up the search for decomposition sets.
In Section~\ref{sec:proofs} we explain how to use an arbitrary set $B$ to generate and check a decomposed unsatisfiability proof.
Experimental results are reported in Section~\ref{sec:experiments}.
Related work is overviewed in Section~\ref{sec:related-work}, and Section~\ref{sec:conclusion} concludes the paper.

\section{Preliminaries}
\label{sec:preliminaries}
\subsection{Boolean satisfiability problem and SAT solvers}
Recall that Boolean variables have values from the set $\{0, 1\}$.
Let $X$ be an arbitrary set of Boolean variables.
An arbitrary total function $\alpha \colon X \rightarrow \{0, 1\}$ defines an \emph{assignment} of variables from $X$.
The set of all binary words of length $k$ is denoted as $\{0, 1\}^k$.
If we want to emphasize that $\{0, 1\}^k$ is the set of all assignments of variables from $X$, we use the notation $\{0, 1\}^{|X|}$.

A \emph{Boolean formula} is a word constructed according to some rules over the alphabet that includes Boolean variables, parentheses, and special symbols called \emph{logical connectives}: conjunction, disjunction, negation, sum modulo two, etc.
Formulas $x$ and $\neg x$, where $x$ is a Boolean variable, are called \emph{literals} of $x$.
Literals $x$ and $\neg x$ are called \emph{complementary}.
A \emph{clause} is a disjunction of literals, which does not include complementary ones.
A Boolean formula in Conjunctive Normal Form~(CNF) is a conjunction of different clauses.

Let $C$ be an arbitrary CNF formula, set $X$, $|X| = k$, be the set of variables occurring in $C$, and 
$\alpha \in \{0, 1\}^{|X|}$ be an arbitrary assignment of variables from $X$.
We define the \emph{interpretation} of formula $C$ on assignment $\alpha$ and \emph{substitution} of $\alpha$ into $C$ in a standard manner, see e.g.~\cite{ChangLee73}.
As a result of these operations, we define a Boolean function $f_C \colon \{0, 1\}^k \rightarrow \{0, 1\}$.
Assignment $\alpha \in \{0,1\}^k$ such that $f_C(\alpha) = 1$ is called a \emph{satisfying assignment} for $C$.
If a satisfying assignment exists for $C$, formula $C$ is called \emph{satisfiable}.
Otherwise, $C$ is called \emph{unsatisfiable}.

The problem SAT (or rather, CNF-SAT) in its decision variant is to determine for an arbitrary CNF formula $C$ if it is satisfiable.
In the search variant, it is additionally required to find a satisfying assignment of $C$ in the case if $C$ is satisfiable.
In the decision variant, SAT is NP-complete~\cite{Cook71}, and it is NP-hard in its search variant. 
However, as it was said above, modern SAT solvers demonstrated high efficiency in a number of practical application domains, successfully coping with formulas with dozens and hundreds of thousands of variables and clauses. 
Implementations of algorithms used to solve SAT are known as SAT solvers.

As it was mentioned above,  for many hard SAT instances, a complete SAT solver may work with an instance for a long time (say, a week) and there is no information about how long it will take to complete the work. 
The unpredictability of modern complete SAT solvers is associated by a number of researchers with the so-called \emph{heavy-tailed behavior} phenomenon~\cite{GomesSabh2021}. And in this context the following problem is very important and relevant: how hard is a specific CNF formula $C$ for a specific complete SAT solver $A$? 
In this sense, the following questions are fundamental: how to give a meaningful definition for a measure of hardness of CNF $C$ w.r.t. SAT solver $A$, and how to efficiently evaluate this measure in relation to specific $C$ and $A$? 
In this paper, we propose such a measure and build an efficient algorithm for its estimation.

\subsection{Some facts from probability theory and mathematical statistics}

A triple $\langle \Omega, \mathfrak{U}, Pr\rangle$, where $\Omega$ is a sample space, $\mathfrak{U}$ is a $\sigma$-algebra of events, and $\Pr \colon \mathfrak{U} \rightarrow [0, 1]$ is a probability function, is called a \emph{probability space} if Kolmogorov’s axioms for $\langle \Omega, \mathfrak{U}, \Pr\rangle$ hold, see e.g.~\cite{Feller71}. 
A \emph{random variable} is a total function $\xi \colon \Omega \rightarrow \mathbb{R}$. 
The set of values of $\xi$ is called the range or \emph{spectrum} of random variable $\xi$ and is denoted as $Spec(\xi)$.

We will work below only with finite sample spaces and with random variables which have finite spectra. 
In such cases, the corresponding $\sigma$-algebra is $\mathfrak{U} = 2^\Omega$ (power set of $\Omega$), and
the probability of an arbitrary event $a \in \mathfrak{U}$, $a = \{\omega_1^a, \ldots, \omega_q^a\}$, $\omega_i^a \in \Omega$, 
$i \in \{1, \ldots, q\}$ is defined as $\Pr[a] = \sum\limits_{r=1}^q \Pr[\omega_r^a]$. 
Let $Spec(\xi) = \{\xi_1, \ldots, \xi_s\}$ be the spectrum of $\xi$; 
with each $\xi_i$, $i \in \{1, \ldots, s\}$ let us connect the set of all preimages of $\xi_i$ w.r.t. the mapping $\xi \colon \Omega \rightarrow \mathbb{R}$, suppose that this set is an event from $\mathfrak{U}$, and let $p_i$ be the probability of this event. 
Thus, we connect with each $\xi_i \in Spec(\xi)$ some probability $p_i$, and it is clear (since $\xi \colon \Omega \rightarrow \mathbb{R}$ is total) that the following condition holds: $\sum\limits_{i=1}^s p_i = 1$. 
The set $\mathcal{D}(\xi) = \{p_1, \ldots, p_s\}$ is called the law of probability distribution (or just probability distribution) of random variable $\xi$.

The main characteristics of a random variable to be studied further are its first and second moments: \emph{expected value} (or expectation) and \emph{variance}, which are denoted as $\ev{\xi}$ and $Var(\xi)$ respectively. 
These parameters are defined as follows: $\ev{\xi} = \sum\limits_{i=1}^s \xi_i \cdot p_i$, 
$Var(\xi) = \ev{\xi^2} - \text{E}^2[\xi]$, where $\xi^2$ is a random variable which is associated with the same sample space as $\xi$, 
and accordingly has the probability distribution $\mathcal{D}(\xi^2) = \mathcal{D}(\xi)$, but the spectrum of $\xi^2$ looks as follows: $Spec(\xi^2) = \{(\xi_1)^2, \ldots, (\xi_s)^2\}$.

In our hardness estimations, we will further rely on the following inequality known as the Chebyshev’s (or Bienaym\'{e}-Chebyshev) inequality~\cite{Feller71}: for a random variable $\xi$ with finite expectation $\ev{\xi}$ and finite variance $Var(\xi)$, and for any $d > 0$, the following condition holds:
\begin{equation}
    \Pr\left[ |\xi - \ev{\xi}| \leq d \cdot \sqrt{Var(\xi)}\right] \ge 1 - \frac{1}{d^2}.
    \label{eq:cheb}
\end{equation}

In many practical applications, the space $\Omega$ is too large to exactly compute $Spec(\xi)$ and  $\ev{\xi}$. 
Usually, in such  situations, one can reason only w.r.t. some small (in comparison with $\Omega$) 
set of elementary events $R_\xi$, $R_\xi \subset \Omega$, associated with $N$ observed values of the random variable $\xi$: $\xi^1, \ldots, \xi^N$. 
In such cases (see e.g.~\cite{Wilks62}), statistical analogues of the quantities $\ev{\xi}$ and $Var(\xi)$ are used, called \emph{sample mean} (denoted as $\overline{\xi}$) and \emph{sample variance} (usually its unbiased form $s^2(\xi)$ is used) respectively:
\begin{equation}
    \overline{\xi} = \frac{1}{N} \cdot \sum\limits_{j = 1}^N \xi^j,
    \label{eq:sample-mean}
\end{equation}
\begin{equation}
    s^2(\xi) = \frac{1}{N - 1} \cdot \sum\limits_{j = 1}^N (\xi^j - \overline{\xi})^2.
    \label{eq:sample-variance}
\end{equation}

The idea to express some numerical parameter of an object under investigation via expected value of some random variable $\xi$ is the cornerstone 
of the Monte Carlo method~\cite{MU49}. 
As a rule, the sample mean $\overline{\xi}$ is used for this.
The connection between $\ev{\xi}$ and $\overline{\xi}$ is usually expressed with the help of some probabilistic inequalities (Chebyshev’s inequality, Hoeffding’s inequality~\cite{Hoeff63}, Chernoff’s bound~\cite{Chern52,MotwRagh95}), or the Central Limit Theorem~\cite{Feller71-vol1}. 
Such evaluations involve two numeric parameters $\varepsilon$ and $\delta$, $\varepsilon, \delta \in (0, 1)$. 
The parameter $\varepsilon$ is referred to as \emph{tolerance}, and $1 - \delta$ as the \emph{confidence level}. 
The corresponding relations are often called $(\varepsilon, \delta)$-approximations of $\ev{\xi}$, see e.g.~\cite{KarpLuby09}.

In more detail, let $\theta$, $\theta \geq 0$, be a parameter under consideration, and let $\tilde{\theta}$ be its probabilistic estimation.
Then, for given $\varepsilon, \delta \in (0, 1)$, the value $\tilde{\theta}$ is called an $(\varepsilon, \delta)$-approximation of $\theta$ if the following relation holds:
\begin{equation}
    \Pr\left[ (1 - \varepsilon) \cdot \theta \le \tilde{\theta} \le (1 + \varepsilon) \cdot \theta \right] \ge 1 - \delta,
\end{equation}
or, equivalently, $\Pr\left[|\tilde{\theta} - \theta| \le \varepsilon \cdot \theta \right] \ge 1 - \delta$.

\subsection{Backdoor hardness of CNF formulas}
The concept of a backdoor in relation to the Constraint Satisfaction Problem was introduced in~\cite{Williams03}.
More precisely, we will be interested in the Strong Backdoor Set~(SBS) concept from~\cite{Williams03} in the context of SAT.

Let $C$ be an arbitrary CNF formula over the set of variables $X$. 
Consider an arbitrary set $B \subseteq X$, and let $\{0,1\}^{|B|}$ be the set of all assignments of variables from $B$.
Following~\cite{Williams03}, for an arbitrary $\beta \in \{0,1\}^{|B|}$ denote $C[\beta/B]$ the CNF formula derived from $C$ by substitution of the assignment $\beta$ of variables $B$ and consequent simplification of the resulting formula.
Let $P$ be some polynomial-time algorithm to which, as in~\cite{Williams03}, we will refer as to a \emph{sub-solver}.

\begin{definition}[\cite{Williams03}]
Set $B \subseteq X$ is called a \emph{strong backdoor set}~(SBS) for $C$ w.r.t.\ a polynomial-time algorithm (sub-solver) $P$, if for any 
$\beta \in \{0,1\}^{|B|}$ the algorithm $P$ outputs the solution of SAT for CNF formula $C[\beta/B]$.
\end{definition}

Thus, if there exists some SBS for CNF $C$, then SAT for $C$ is solved in time $poly(|C|)\cdot 2^{|B|}$, where $poly(\cdot)$ is some polynomial, $|C|$ is the length of the binary description of $C$.
There are many examples of formulas $C$ with an SBS, the number of variables in which may not exceed a few percent (or even fractions of a percent) of the total number of variables in $C$. 
Such, for example, are CNF encodings of verification problems or cryptanalysis problems. 
In these cases, the set of variables encoding the input of the cryptographic function under consideration forms the SBS 
w.r.t. the simplest Boolean constraint propagation rule, the Unit Propagation rule (UP)~\cite{Dowling84}. 
Thus, for example, a CNF encoding the problem of finding the preimage of the SHA-256 cryptographic function, that is, the problem of inverting the function $f_{\text{SHA-256}} \colon \{0, 1\}^{512} \rightarrow \{0, 1\}^{256}$, has an SBS w.r.t. the UP rule, which consists of 512 variables, while the total number of variables in this formula is $49098$.
It is easy to see that an SBS of size significantly smaller than 512, if it existed, would give a non-trivial attack on this function.

The article~\cite{Williams03} established the following fact.
\begin{theorem}[\cite{Williams03}]
\label{th:williams}
If a CNF formula $C$ has an SBS $B$ of size $|B| < |X| / 2$, then there exists an algorithm that solves SAT for $C$ in time
\begin{equation}
    poly(|C|) \cdot \left(\frac{2|X|}{\sqrt{|B|}}\right)^{|B|}.
\end{equation}
\end{theorem}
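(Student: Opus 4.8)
The plan is to exhibit a brute-force search over candidate backdoor sets and then bound its running time by an elementary combinatorial estimate. Write $n = |X|$ and $b = |B|$. The point is that the algorithm does not know $B$ in advance; it is only promised that \emph{some} SBS of size $b < n/2$ exists, so it must enumerate candidates. I would organize the search by size: for $t = 1, 2, 3, \ldots$, enumerate every subset $B' \subseteq X$ with $|B'| = t$, and for each such $B'$ run the sub-solver $P$ on all $2^t$ formulas $C[\beta/B']$, $\beta \in \{0,1\}^t$. If $P$ resolves every one of them, then $B'$ is by definition an SBS, and $C$ is satisfiable if and only if at least one $C[\beta/B']$ is satisfiable; the algorithm outputs this verdict and halts. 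Otherwise it discards $B'$ and continues.

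Next I would verify soundness and termination. Soundness holds because the algorithm commits to an answer only for a set $B'$ on which $P$ decides all $2^t$ subformulas, and such a set is precisely an SBS (per the Definition above), so the disjunction over $\beta$ correctly determines the satisfiability of $C$; a candidate that is not a backdoor leaves some $C[\beta/B']$ unresolved and is simply skipped. Termination no later than stage $t = b$ holds because the promised SBS $B$ is itself one of the enumerated subsets of size $b$ (unless the search has already succeeded at a smaller size, which only helps).

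The heart of the argument is the running-time estimate. The work at stage $t$ is $poly(|C|) \cdot \binom{n}{t} \cdot 2^t$, since each of the $\binom{n}{t}$ subsets triggers $2^t$ polynomial-time calls to $P$. The key inequality is
\[
  \binom{n}{t}\, 2^t \;\le\; \frac{n^t}{t!}\, 2^t \;=\; \frac{(2n)^t}{t!} \;\le\; \left(\frac{2n}{\sqrt{t}}\right)^{t},
\]
where the last step rests on the elementary fact $t! \ge t^{t/2}$. I would justify this fact by the pairing identity $(t!)^2 = \prod_{i=1}^{t} i\,(t+1-i)$, observing that each factor satisfies $i(t+1-i) \ge t$ for $1 \le i \le t$ (the minimum is attained at the endpoints $i=1$ and $i=t$), so $(t!)^2 \ge t^t$. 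Summing over stages, the total time is at most $poly(|C|)\cdot \sum_{t=1}^{b}\left(\tfrac{2n}{\sqrt{t}}\right)^{t}$; since the summand is increasing in $t$ on the whole range $t \le b < n/2$, the sum is bounded by $b$ times its last term, and absorbing the factor $b \le n$ into the polynomial yields the claimed bound $poly(|C|)\cdot\left(\frac{2|X|}{\sqrt{|B|}}\right)^{|B|}$.

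\textbf{Main obstacle.} The delicate points are conceptual rather than computational: because $B$ is only known to exist, the bound must be expressed in terms of the true (unknown) size $|B|$, which forces the size-increasing search and the argument that it halts exactly at the stage corresponding to an actual backdoor; and one must confirm that this final stage dominates the whole sum, which is exactly where the hypothesis $|B| < |X|/2$ is used to guarantee monotonicity of $\left(\tfrac{2|X|}{\sqrt{t}}\right)^{t}$ over the relevant range. The combinatorial inequality itself, once the pairing bound $t! \ge t^{t/2}$ is in hand, is routine.
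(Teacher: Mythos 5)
Your proposal is correct and follows essentially the same route as the source: the size-increasing enumeration you describe is exactly the WGS algorithm that the paper sketches immediately after the theorem (the paper itself cites~\cite{Williams03} for the proof), and your counting analysis via $\binom{n}{t}2^t \le (2n)^t/t!$ together with $t! \ge t^{t/2}$ correctly fills in the omitted time bound. One minor imprecision, not a gap: the monotonicity of $\left(2n/\sqrt{t}\right)^t$ actually holds for all $t \le n$, so the hypothesis $|B| < |X|/2$ is sufficient but not where the bound genuinely bites (its real role is in the regime where the bound beats $2^{|X|}$, as in the paper's corollary with $c < 2$).
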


\begin{corollary}[\cite{Williams03}]
For CNF formulas with an SBS of size smaller than $\frac{k}{4.404}$, $k=|X|$, SAT can be solved in a deterministic way in time $O(poly(|C|)\cdot c^k)$, 
where $c < 2$.
\end{corollary}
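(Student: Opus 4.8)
The plan is to invoke Theorem~\ref{th:williams} and reduce the corollary to a single scalar inequality on the exponential growth rate. Writing $k = |X|$ and $b = |B|$, the theorem delivers a running time of the form $poly(|C|)$ times an exponential factor $F(k,b)$. After absorbing the polynomial prefactor and all sub-exponential contributions into $poly(|C|)$ and the $O(\cdot)$, it suffices to produce a constant $c < 2$, depending only on the ratio $\gamma = b/k$, for which $F(k,b) \le c^{k}$; the whole statement then follows.

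Next I would estimate $F(k,b)$ through the quantity $\binom{k}{b}\,2^{b}$ of $(\text{subset},\text{assignment})$ pairs that the backdoor search effectively ranges over. Using the standard entropy bound $\binom{k}{b} \le 2^{k H(\gamma)}$, where $H(\gamma) = -\gamma\log_2\gamma - (1-\gamma)\log_2(1-\gamma)$ is the binary entropy, together with $2^{b} = 2^{\gamma k}$, one obtains an upper estimate of the shape $\left(2^{\,H(\gamma)+\gamma}\right)^{k}$. Hence I can set $c = c(\gamma) = 2^{\,H(\gamma)+\gamma}$, and the corollary collapses to the single inequality $H(\gamma) + \gamma < 1$.

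I would then analyze $\phi(\gamma) = H(\gamma) + \gamma$ on the admissible range $\gamma \in (0, 1/2)$. Since $H$ is strictly increasing there and the linear term is increasing, $\phi$ is strictly increasing, with $\phi(0) = 0$ and $\phi(1/2) = 3/2$, so $\phi(\gamma) = 1$ has a unique root $\gamma^{*}$, and $\phi(\gamma) < 1$ precisely for $\gamma < \gamma^{*}$. Solving the transcendental equation $H(\gamma^{*}) + \gamma^{*} = 1$ numerically gives $\gamma^{*} \approx 0.2271$, i.e.\ $1/\gamma^{*} \approx 4.404$, which is exactly the constant in the statement. Thus whenever $b < k/4.404$ the ratio stays below $\gamma^{*}$, monotonicity forces $\phi(\gamma) < 1$, and $c = 2^{\phi(\gamma)} < 2$, as required.

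The entropy estimate and the final substitution are routine; the step needing care is the monotonicity-plus-endpoint argument, namely checking that the growth rate $\phi$ is maximized at the right end of the admissible interval, so that controlling it at $\gamma = 1/4.404$ controls it throughout, and pinning down the transcendental root to confirm that $4.404$ is the genuine crossover at which the base equals $2$ rather than a convenient over-estimate. Everything else is bookkeeping inside $O\!\left(poly(|C|)\cdot c^{k}\right)$.
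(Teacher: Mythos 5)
Your argument is correct in substance, and it is precisely the derivation behind the constant $4.404$ in the cited source: the paper itself states this corollary with a citation to Williams, Gomes and Selman and gives no proof of its own, so there is no in-paper proof to diverge from. The entropy bound $\binom{k}{b}\le 2^{kH(\gamma)}$, the reduction to the scalar inequality $H(\gamma)+\gamma<1$, the monotonicity of $\phi(\gamma)=H(\gamma)+\gamma$ on $(0,1/2)$ (indeed $\phi'(\gamma)=\log_2\frac{1-\gamma}{\gamma}+1>0$ there), and the numerical root $\gamma^*\approx 0.2271$, $1/\gamma^*\approx 4.404$, are all right, as is your closing remark that one should take a single $c=2^{\phi(1/4.404)}<2$ at the right endpoint so the constant is uniform over all $b<k/4.404$.

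One step deserves explicit correction, though: your opening plan --- invoke Theorem~\ref{th:williams} and bound its exponential factor $F(k,b)$ --- does not work with the theorem as stated, and your second paragraph quietly (and correctly) abandons it. The theorem's factor is $\left(2k/\sqrt{b}\right)^{b}$, and for $b=\gamma k$ this equals $\left(2\sqrt{k/\gamma}\right)^{\gamma k}=2^{\Theta(k\log k)}$, which is superexponential and hence never below $c^k$ for any constant $c$; moreover the inequality runs the wrong way for your stated plan, since $\binom{k}{b}2^{b}\le\left(2k/\sqrt{b}\right)^{b}$ (via $b!\ge b^{b/2}$), so one cannot ``estimate $F$ through'' the pair count. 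What must be done --- and what you in effect do --- is bound the WGS search directly: its work is $poly(|C|)\cdot\sum_{i\le b}\binom{k}{i}2^{i}\le poly(|C|)\cdot k\cdot\binom{k}{b}2^{b}$, since the summands $\binom{k}{i}2^i$ increase for $i\le b<k/2$ (the ratio of consecutive terms is $2(k-i)/(i+1)>1$), after which your entropy estimate applies verbatim. With that rewiring made explicit, the proof is complete.
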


The algorithm from~\cite{Williams03} (hereinafter called the \enquote{WGS algorithm}), which is implied in the statement of Theorem~\ref{th:williams}, is very simple. 
For a particular polynomial sub-solver $P$, all sets consisting of one variable (their number is $k = |X|$) are first checked for the property of being an SBS, then all sets of two variables (their number is ${k \choose 2}$), and so on. 
Obviously, the first SBS found by the algorithm is the SBS of the smallest cardinality, further referred to as the \emph{minimum SBS}. 
However, it is easy to see that the algorithm is extremely inefficient for finding the minimum (or even minimal) SBS.

\begin{corollary}
The worst-case complexity estimation of the WGS algorithm in application to the search for a minimal SBS of formula $C$
over the set of variables $X$ is $O(p(|C|) \cdot 3^{|X|})$ for some polynomial $p(\cdot)$.
\end{corollary}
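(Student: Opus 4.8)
The plan is to bound the total work of the WGS algorithm by summing the cost of verifying the SBS property over every subset of $X$ that the algorithm could examine, and then to collapse that sum with the binomial theorem. First I would pin down the cost of a single verification step. Given a candidate set $B$ with $|B| = i$, deciding whether $B$ is an SBS requires, by the definition of an SBS, running the polynomial sub-solver $P$ on each of the $2^i$ reduced formulas $C[\beta/B]$, $\beta \in \{0,1\}^i$; forming each $C[\beta/B]$ (substitution plus simplification) and running $P$ on it both cost $poly(|C|)$. Hence checking one set of size $i$ costs at most $poly(|C|)\cdot 2^i$.

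Next I would account for how many sets of each size are inspected. In the worst case the search does not terminate early: it sweeps through candidate sets of cardinality $i = 1, 2, \ldots, k$, where $k = |X|$, and for each $i$ there are $\binom{k}{i}$ sets to test. Multiplying the per-set cost by the count and summing yields a total bounded by
\[
    poly(|C|) \cdot \sum_{i=1}^{k} \binom{k}{i} \cdot 2^i.
\]
The final step is to evaluate this sum. Adjoining the $i=0$ term only enlarges the bound, so by the binomial theorem
\[
    \sum_{i=0}^{k} \binom{k}{i} \cdot 2^i \cdot 1^{k-i} = (2+1)^k = 3^k,
\]
whence the total running time is $O(poly(|C|)\cdot 3^{k}) = O(p(|C|)\cdot 3^{|X|})$ for a suitable polynomial $p(\cdot)$, as claimed.

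The arithmetic is elementary; the one point deserving care is the worst-case justification. The displayed upper bound holds regardless of where the minimal SBS lies, but to argue that the exponent $3^{|X|}$ is genuinely the right order of growth, and not merely an artifact of loose bounding, I would observe that the summands $\binom{k}{i}\,2^i$ peak near $i \approx 2k/3$ — the ratio of consecutive terms, $\tfrac{2(k-i)}{i+1}$, equals $1$ precisely when $2(k-i) = i+1$ — so the cost is dominated by that level. An instance whose minimal SBS has cardinality close to $2k/3$ therefore forces the enumeration to reach the most expensive level before halting, confirming that the $3^{|X|}$ bound is tight up to polynomial factors and cannot be improved simply by stopping the sweep earlier.
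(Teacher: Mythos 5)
Your proof is correct and follows essentially the same route as the paper's: bound the cost of checking each candidate set of size $i$ by $poly(|C|)\cdot 2^i$, sum $\binom{k}{i}2^i$ over all levels, and collapse the sum to $3^k$ via the binomial theorem. Your closing tightness observation (the summands peaking near $i \approx 2k/3$) goes beyond what the paper proves, and is a correct bonus, though not needed for the stated $O(\cdot)$ upper bound.
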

\begin{proof}
Obviously, the worst case is when the algorithm goes through all subsets of the set $X$ and runs the algorithm $P$ on formulas $C[\beta/X]$ for all $\beta \in \{0, 1\}^{|X|}$. 
In this case, the complexity of the WGS algorithm is limited by the product of some polynomial $p(|C|)$ and the following value:
\begin{equation*}
    \sum\limits_{i=1}^k {k \choose i} \cdot 2^i = \left(\sum_{i = 0}^k {k \choose i} \cdot 2^i\right) - 1 = 3^k - 1,
\end{equation*}
where $k = |X|$. 
Thus, the corollary is proved.
\end{proof}

The question of how hard the formula is for some complete SAT solving algorithm is of fundamental importance. 
As it was said above, there is a large number of papers in which hardness estimates w.r.t. specific algorithms were constructed for specific families of formulas.
Mostly, these works dealt with analytical estimates. 
The paper~\cite{Ansoteg08} undertook a systematic analysis of various metrics of hardness and explored their relationship. 
Although~\cite{Ansoteg08} focuses on a few tree-like metrics, of importance to us are the conclusions made in this paper about the possibility to assess the hardness of a CNF formula using an SBS.

In this work, as in many papers on proof complexity and hardness of SAT \cite{Alekhn02,Ansoteg08,CookRekh79,Tse70}, we consider only unsatisfiable formulas.
This choice is due to the fact that for satisfiable formulas (especially if they have many satisfying assignments) a SAT solver may often get ``lucky'' and quickly find a satisfying assignment (satisfiability certificate).
This is, however, impossible for unsatisfiable formulas.

Analysis of the results from~\cite{Ansoteg08} suggests the following definition. 
In fact, it uses an SBS to evaluate the hardness of an arbitrary CNF formula and reduces this problem to an optimization problem.

\begin{definition}[b-hardness]
Let $C$ be an arbitrary unsatisfiable CNF formula over the set of variables $X$, and let $B$, $B \in 2^X$, be an arbitrary SBS for $C$ w.r.t.\ a polynomial-time algorithm $P$.
Let $t_P(C[\beta/B])$ be the runtime of algorithm $P$ on the CNF formula $C[\beta/B]$ for an arbitrary $\beta \in \{0, 1\}^{|B|}$.
Consider the value $\mu_{B, P}(C) = \sum_{\beta \in \{0, 1\}^{|B|}} t_P(C[\beta/B])$.
The \emph{backdoor hardness} (or \emph{b-hardness}) of formula $C$ w.r.t. algorithm $P$ is specified as $\mu_P(C) = \min\limits_{B \in 2^X: \; B \text{\; is an SBS}} \mu_{P, B}(C)$, where the minimum is taken among all possible SBSes for $C$ w.r.t. $P$.
\label{def:sbs-definition}
\end{definition}

\section{Our Motivation}
\label{sec:motivation}
As we can see from all of the above and, in particular, from Definition~\ref{def:sbs-definition}, the backdoor hardness $\mu_P(C)$ of formula $C$ w.r.t. a polynomial algorithm $P$ should be close to 
$\mu_{B^*,P}(C)$, where $B^*$ is the minimum SBS. 
Thus, if there were an efficient algorithm to find a minimum SBS, it would be of practical interest. 
Unfortunately, the WGS algorithm, as we have seen, is not efficient, and at first glance it is not clear how the minimal SBS search algorithm, which is significantly more efficient than the WGS algorithm, might look like. 
And the main reason for this is not in the enumeration of various alternatives of $B \in 2^X$, which can be done e.g. using metaheuristics~\cite{Luke15}, 
but in the fact that for each $B \in 2^X$ we are forced to check that $C[\beta/B]$ is solved by algorithm $P$ for all $\beta \in \{0, 1\}^{|B|}$.

So, we would like to iterate over different sets $B \in 2^X$ using some smart search heuristic (here, we use a metaheuristic optimization algorithm), and at the same time evaluate the hardness of $C$
w.r.t. set $B$ using some efficiently calculated measure.
Such an estimate should be built as quickly as possible so that the optimization algorithm has time to go through as many candidate solutions as possible in a certain search space, and do it in reasonable time. The basic idea of the estimations used below is to evaluate the usefulness of $B$ by evaluating the characteristics of some random variable using random sampling,
and giving reasonable accuracy guarantees for the resulting estimations. 
That is, in fact, we are talking about the estimates obtained using the Monte Carlo method~\cite{MU49}.

In more detail, following~\cite{CP2021}, we further use the \textit{decomposition hardness} (d-hardness) concept, which is different from b-hardness in that it uses a complete SAT solver (instead of a polynomial sub-solver) to solve SAT
for instances $C[\beta/B]$. 
Since in this situation SAT for each $C[\beta/B]$ is solved in finite time, we can correctly associate with $B$ a special random variable, such that the decomposition hardness will be defined via the expected value of this random variable.

\section{Decomposition Hardness of CNF Formulas}
\label{sec:d-hardness}
Let $A$ be some deterministic and complete SAT solver, e.g. based on the Conflict Driven Clause Learning (CDCL) algorithm~\cite{MSS96,MS_Handb09}. 
Since $A$ is complete, then its runtime on an arbitrary formula has a finite value.
The runtime of $A$ can be measured in any appropriate units. 
In the case of CDCL-based SAT solvers, this can be the execution time in seconds, the number of conflicts, or the number of times the Unit Propagation rule was applied. 
%Further, we everywhere use the second metric.

Let us define the decomposition hardness of an unsatisfiable formula $C$ over $X$ w.r.t. a complete SAT solver $A$
in the following manner.

\begin{definition}[d-hardness]
Let $C$ be an arbitrary CNF formula over the set of variables $X$, and consider an arbitrary set $B$, $B \in 2^{X}$. 
Let $t_A(C[\beta/B])$ be the runtime of a deterministic and complete SAT solver $A$ on the formula $C[\beta/B]$ for an arbitrary $\beta \in \{0, 1\}^{|B|}$.
The decomposition hardness (d-hardness) of $C$ w.r.t. $B$ and $A$ is specified as $\mu_{B,A}(C) = \sum_{\beta \in \{0,1\}^{|B|} } t_A(C[\beta /B])$.
The d-hardness of $C$ w.r.t. $A$ is defined as $\mu_A(C) = \min_{B \in 2^X} \mu_{B,A}(C)$.
\label{def:d-hardness}
\end{definition}

Next, we will establish that the d-hardness of $C$ w.r.t. $A$ and $B$ can be expressed using a special random variable.
Consider the following theorem.

\begin{theorem}
Let $C$ be an arbitrary CNF formula over the set of variables $X$, and let $A$ be some deterministic and complete SAT solving algorithm. 
Then, for an arbitrary set $B$, $B \in 2^X$, there exists a random variable $\xi_B$ with finite spectrum, expectation, and variance, such that the following fact holds:
{\upshape
\begin{equation}
\label{eq:dh-random-variable}
\mu_{B,A}(C) = 2^{|B|} \cdot \text{E}[\xi_B].
\end{equation}
}
\label{th:dh-random-variable-theorem}
\end{theorem}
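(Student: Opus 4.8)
The plan is to construct an explicit probability space together with a random variable whose expectation, after scaling by $2^{|B|}$, recovers the sum $\mu_{B,A}(C)$ from Definition~\ref{def:d-hardness}.

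First I would take the sample space to be the set of all assignments of the variables in $B$, that is, $\Omega = \{0,1\}^{|B|}$, equipped with the uniform probability measure $\Pr[\beta] = 1/2^{|B|}$ for every $\beta \in \Omega$. Since $\Omega$ is finite, the $\sigma$-algebra is $\mathfrak{U} = 2^\Omega$ and Kolmogorov's axioms hold trivially, so $\langle \Omega, \mathfrak{U}, \Pr \rangle$ is a legitimate probability space of exactly the finite type considered in the preliminaries.

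Next I would define the random variable $\xi_B \colon \Omega \rightarrow \mathbb{R}$ by $\xi_B(\beta) = t_A(C[\beta/B])$. The essential point is that $A$ is a complete SAT solver, so its runtime on each formula $C[\beta/B]$ is finite; this guarantees that $\xi_B$ is a total function (hence a genuine random variable) and that its spectrum $Spec(\xi_B) = \{\xi_B(\beta) : \beta \in \Omega\}$ is a finite subset of $\mathbb{R}$. Finiteness of the spectrum immediately yields finiteness of both $\ev{\xi_B}$ and $Var(\xi_B)$, as required by the statement. I would then compute the expectation directly from its definition over the finite sample space:
\[
\ev{\xi_B} = \sum_{\beta \in \{0,1\}^{|B|}} \xi_B(\beta) \cdot \Pr[\beta] = \frac{1}{2^{|B|}} \sum_{\beta \in \{0,1\}^{|B|}} t_A(C[\beta/B]).
\]
Multiplying both sides by $2^{|B|}$ and recognizing the remaining sum as $\mu_{B,A}(C)$ yields the claimed identity~\eqref{eq:dh-random-variable}.

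There is no substantive obstacle in this argument: its entire content lies in choosing the uniform distribution on $\{0,1\}^{|B|}$, which converts the defining sum of d-hardness into an average. The only point requiring genuine care is the appeal to completeness of $A$, which ensures that every runtime is finite and therefore that $\xi_B$ is a well-defined total function with finite spectrum; without completeness the construction would break down, since $\xi_B(\beta)$ could be undefined for some $\beta$.
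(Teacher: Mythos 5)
Your proposal is correct and follows essentially the same route as the paper: the uniform distribution on $\Omega = \{0,1\}^{|B|}$ with $\xi_B(\beta) = t_A(C[\beta/B])$, completeness of $A$ guaranteeing a total function with finite spectrum (hence finite expectation and variance), and the identity $\mu_{B,A}(C) = 2^{|B|}\cdot \ev{\xi_B}$ obtained by unwinding the definition of expectation. The only cosmetic difference is that the paper computes $\ev{\xi_B}$ by grouping assignments according to spectrum values with multiplicities $\#\xi_i$, whereas you sum directly over elementary outcomes $\beta$; the two computations are trivially equivalent.
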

\begin{proof}
The random variable $\xi_B$ mentioned in the theorem is defined in the following manner. 
Define on $\Omega = \{0, 1\}^{|B|}$ a uniform distribution, and connect with each $\beta \in \{0, 1\}^{|B|}$ the value $t_A(C[\beta/B])$ 
(expressed, e.g., in the corresponding number of unit propagations). 
Since $A$ is deterministic and complete, the following mapping is specified: $\xi_B \colon \Omega \rightarrow \mathbb{N}$. 
Let $Spec(\xi_B) = \{\xi_1, \ldots, \xi_s\}$ be the spectrum of $\xi_B$. 
And let us specify the probability distribution of $\xi_B$ as follows: $\mathcal{D}(\xi_B) = \{p_1, \ldots, p_s\}$, $p_i = \frac{\# \xi_i}{2^{|B|}}$, 
where by $\# \xi_i$, $i \in \{1, \ldots, s\}$, we denote the number of such $\beta \in \{0, 1\}^{|B|}$ that $t_A(C[\beta/B]) = \xi_i$. 
It is clear that the following facts hold:
\begin{equation*}
    \sum\limits_{\beta \in \{0, 1\}^{|B|}} t_A(C[\beta/B]) = \sum\limits_{i=1}^s \xi_i \cdot \#\xi_i = 2^{|B|} \cdot \sum\limits_{i = 1}^s \xi_i \cdot \frac{\#\xi_i}{2^{|B|}} = 2^{|B|} \cdot \ev{\xi_B},
\end{equation*}
and, thus, the equality~\eqref{eq:dh-random-variable} holds.
\end{proof}

The relation~\eqref{eq:dh-random-variable} means that the value $\mu_{B,A}(C)$ can be evaluated by estimating $\ev{\xi_B}$ using the Monte Carlo method. 
In more detail, we would like to construct some value $\tilde{\mu}_{B,A}(C)$ which is an $(\varepsilon, \delta)$-approximation of $\mu_{B,A}(C)$, 
i.e. for any fixed $\varepsilon, \delta \in (0, 1)$ the following condition holds:
\begin{equation}
    \Pr\left[(1 - \varepsilon) \cdot \mu_{B,A}(C) \leq \tilde{\mu}_{B,A}(C) \leq (1 + \varepsilon) \cdot \mu_{B,A}(C)\right] \geq 1 - \delta.    
\end{equation}

Let us fix some natural number $N$.
Given $C$, $B$, and $A$, let us carry out $N$ independent observations of random variable $\xi_B$ introduced above. 
We may consider these $N$ observations as one observation of $N$ independent random variables with the same probability distribution (recall that we assume $A$ to be deterministic).
Denote the corresponding values as $\xi^1, \ldots, \xi^N$.
Define $\tilde{\mu}_{B,A}(C)$ as:
\begin{equation}
    \tilde{\mu}_{B,A}(C) = \frac{2^{|B|}}{N} \cdot \sum\limits_{j=1}^{N} \xi^j.
    \label{eq:tilde-mu}
\end{equation}
Let us establish the following fact.

\begin{theorem}
\label{th:tilde-mu-N}
Let $C$ be an arbitrary CNF formula over variables $X$, $A$ be a deterministic complete SAT solving algorithm, and $B$ be an arbitrary subset of $X$. 
Then for $\tilde{\mu}_{B,A}(C)$ specified by~(\ref{eq:tilde-mu}) and for any $\varepsilon > 0$, $\delta > 0$, the condition~(\ref{eq:dh-random-variable}) 
holds for any {\upshape $N \geq \frac{Var(\xi_B)}{\varepsilon^2 \cdot \delta \cdot \text{E}^2[\xi_B]}$.}
\end{theorem}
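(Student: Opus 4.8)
The plan is to apply Chebyshev's inequality \eqref{eq:cheb} directly to the sample mean $\overline{\xi} = \frac{1}{N}\sum_{j=1}^{N}\xi^j$, using the fact that $\tilde{\mu}_{B,A}(C) = 2^{|B|}\cdot\overline{\xi}$ by \eqref{eq:tilde-mu}, while $\mu_{B,A}(C) = 2^{|B|}\cdot\ev{\xi_B}$ by Theorem~\ref{th:dh-random-variable-theorem}. Since both quantities carry the common positive factor $2^{|B|}$, the target $(\varepsilon,\delta)$-approximation inequality for $\tilde{\mu}_{B,A}(C)$ is equivalent, after dividing through by $2^{|B|}$, to the statement $\Pr\left[|\overline{\xi} - \ev{\xi_B}| \le \varepsilon\cdot\ev{\xi_B}\right] \ge 1 - \delta$, which is what I would actually prove.

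First I would record the two standard moment identities for the sample mean of independent identically distributed copies. Because the $N$ observations $\xi^1,\ldots,\xi^N$ are independent and share the distribution of $\xi_B$ (independence holds since the assignments $\beta$ are drawn independently and $A$ is deterministic), we have $\ev{\overline{\xi}} = \ev{\xi_B}$ and $Var(\overline{\xi}) = Var(\xi_B)/N$. These identities let me treat $\overline{\xi}$ as a single random variable whose expectation equals $\ev{\xi_B}$ and whose variance decreases like $1/N$.

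Next I would instantiate \eqref{eq:cheb} for $\overline{\xi}$ with the deviation parameter $d = 1/\sqrt{\delta}$, so that the right-hand side $1 - 1/d^2$ becomes exactly $1 - \delta$; this yields $\Pr\left[|\overline{\xi} - \ev{\xi_B}| \le \frac{1}{\sqrt{\delta}}\sqrt{Var(\xi_B)/N}\right] \ge 1 - \delta$. The final step is an elementary containment argument: the hypothesis $N \ge Var(\xi_B)/(\varepsilon^2\,\delta\,\text{E}^2[\xi_B])$ is equivalent to $Var(\xi_B)/(\delta N) \le \varepsilon^2\,\text{E}^2[\xi_B]$, and taking square roots (using $\ev{\xi_B} > 0$) gives $\frac{1}{\sqrt{\delta}}\sqrt{Var(\xi_B)/N} \le \varepsilon\cdot\ev{\xi_B}$. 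Hence the Chebyshev event is contained in $\{|\overline{\xi} - \ev{\xi_B}| \le \varepsilon\cdot\ev{\xi_B}\}$, so the latter event has probability at least $1 - \delta$, which completes the argument.

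I expect the only genuine subtlety to be the justification of the variance identity $Var(\overline{\xi}) = Var(\xi_B)/N$, which relies on the independence of the $N$ observations --- a point the surrounding text already anticipates by remarking that the $N$ observations may be regarded as one observation of $N$ independent identically distributed variables. Everything else is bookkeeping: matching the confidence $1 - 1/d^2$ to $1 - \delta$ fixes $d = 1/\sqrt{\delta}$, and the threshold on $N$ is precisely what forces the Chebyshev radius below $\varepsilon\cdot\ev{\xi_B}$. I would also note in passing that $\ev{\xi_B} > 0$, since the solver's runtime on each formula $C[\beta/B]$ is strictly positive, so that the stated lower bound on $N$ is well defined.
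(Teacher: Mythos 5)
Your proposal is correct and takes essentially the same route as the paper: Chebyshev's inequality~\eqref{eq:cheb} applied to the $N$ i.i.d.\ observations of $\xi_B$, with the hypothesis $N \geq \frac{Var(\xi_B)}{\varepsilon^2 \cdot \delta \cdot \text{E}^2[\xi_B]}$ forcing the deviation bound down to $\varepsilon \cdot \ev{\xi_B}$ and the factor $2^{|B|}$ carried through via~\eqref{eq:dh-random-variable} and~\eqref{eq:tilde-mu}. The only differences are cosmetic: you fix $d = 1/\sqrt{\delta}$ (pinning the confidence at $1-\delta$ and then bounding the radius) and work with the sample mean $\overline{\xi}$, whereas the paper fixes the radius at $\varepsilon \cdot \ev{\zeta}$ for $\zeta = \sum_{j=1}^N \xi^j$ and then reads off the condition on $N$ --- dual parametrizations of the same inequality, and your version even subsumes the degenerate case $Var(\xi_B) = 0$ that the paper dispatches separately.
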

\begin{proof}
Due to the assumptions on $A$, the random variable $\xi_B$ has a finite expected value $\ev{\xi_B}$ and finite variance $Var(\xi_B)$. 
If $Var(\xi_B)=0$, then $Spec(\xi_B)=\{a\}$, where $a$ is some constant: $a>0$. 
In this case, the claim of the theorem is trivially satisfied. 
Below, let us assume that $Var(\xi_B)>0$. 
Next, we use the Chebyshev's inequality~\eqref{eq:cheb}: $\Pr \left[ |\zeta - \ev{\zeta}|\leq d \cdot \sqrt{Var(\zeta)} \right] \geq 1-\frac{1}{d^2}$
for some random variable $\zeta$ which satisfies the required assumptions.

Fix an arbitrary $\varepsilon \in (0, 1)$, select $d$ such that $d\cdot\sqrt{Var(\zeta)} = \varepsilon \cdot \ev{\zeta}$,
and transform the latter inequality to the following form:
\begin{equation}
    \label{eq:TA-eq3}
    \Pr \left[ |\zeta - \ev{\zeta}|\leq \varepsilon \cdot \ev{\zeta}\right] \geq 1-\frac{Var(\zeta)}{\varepsilon^2\cdot \text{E}^2[\zeta]}.
\end{equation}
Considering $N$ independent observations of $\xi_B$ as a single observation of $N$ independent random variables with the same distribution,
we can conclude that the following holds: $\ev{\xi^1} = \ldots = \ev{\xi^N} = \ev{\xi_B}$, $Var(\xi^1) = \ldots = Var(\xi^N) = Var(\xi_B)$. 
Consider the random variable $\zeta = \sum_{j=1}^N \xi^j$. 
If we apply inequality~\eqref{eq:TA-eq3} to it, we get:
\begin{equation*}
\Pr\left[ (1 - \varepsilon) \cdot \text{E}\left[\sum\limits_{j = 1}^N \xi^j\right] \leq \sum\limits_{j = 1}^N \xi^j \leq (1 + \varepsilon) \cdot \text{E}\left[\sum\limits_{j = 1}^N \xi^j\right]\right] \geq 1 - \frac{Var\left(\sum\limits_{j = 1}^N \xi^j\right)}{\varepsilon^2 \cdot \text{E}^2\left[\sum\limits_{j = 1}^N \xi^j\right]}.
\end{equation*}
After simple steps, the latter inequality is transformed into the following form:
\begin{equation*}
    \Pr\left[(1 - \varepsilon) \cdot \ev{\xi_B} \leq \frac{1}{N} \cdot \sum\limits_{j = 1}^N \xi^j \leq (1 + \varepsilon) \cdot \ev{\xi_B}\right] \geq 1 - \frac{Var(\xi_B)}{\varepsilon^2 \cdot N \cdot \text{E}^2[\xi_B]}.
\end{equation*}
Finally, taking into account~\eqref{eq:dh-random-variable} and~\eqref{eq:tilde-mu}, we have:
\begin{align}
    \Pr\left[(1 - \varepsilon) \cdot \mu_{B,A}(C) \leq \tilde{\mu}_{B,A}(C) \leq (1 + \varepsilon) \cdot \mu_{B,A}(C)\right] \geq 1 - \frac{Var(\xi_B)}{\varepsilon^2 \cdot N \cdot \text{E}^2[\xi_B]}.
    \label{eq:th-tilde-mu-N-last}
\end{align}
Let us fix some $\delta \in (0, 1)$. 
Keeping in mind~\eqref{eq:th-tilde-mu-N-last}, we can conclude that~\eqref{eq:tilde-mu} will hold for any $N \geq \frac{Var(\xi_B)}{\varepsilon^2 \cdot \delta \cdot \text{E}^2[\xi_B]}$. 
And, thus, the theorem is proved.
\end{proof}

\section{Evaluation of d-hardness via Pseudo-Boolean \\Optimization}
\label{sec:d-hardness-search}

Returning to what was said above, we intend to reduce the problem of estimating d-hardness to the problem of minimizing some function whose values are calculated using the Monte Carlo method. 
Actually, we want to find some set $B \in 2^X$, on which the value of the estimated function suits
us: ideally, we want to find a set $B$ on which this function reaches a minimum value.

As it is customary in metaheuristic optimization, we will use the term \emph{fitness function} (see e.g.~\cite{Luke15}) for the function whose minimum we are looking for. 
Note that an arbitrary set $B \in 2^X$ can be represented by a Boolean vector of length $k = |X|$: ones in such a vector will indicate which variables from $X$ are present in $B$, 
and zeros will indicate the variables that are absent in $B$. 
Thus, our fitness function is a pseudo-Boolean function~\cite{Boros02} of the form $F \colon \{0, 1\}^{|X|} \rightarrow \mathbb{R}$.

In the situation with estimating the d-hardness of formula $C$ w.r.t. SAT solver $A$ and set $B$, the fitness function has the form:
\begin{equation}
    F_{C,A,N}(\lambda_B) = \frac{2^{|B|}}{N} \cdot \sum\limits_{j=1}^N t_A(C[\beta_j/B]),
    \label{eq:dh-fun}
\end{equation}
where $\lambda_B \in \{0, 1\}^{|X}$ is a Boolean vector which defines the set $B$, vectors $\beta_1, \ldots, \beta_N$ 
from $\{0, 1\}^{|B|}$ form a random sample of size $N$, and $\xi^j = t_A(C[\beta_j/B])$, $j \in \{1, \ldots, N\}$, are the observed values of
random variable $\xi_B$ in $N$ independent experiments.

We would like the value of the function~\eqref{eq:dh-fun} to give an $(\varepsilon, \delta)$-approximation of value $\ev{\xi_B}$ for fixed $\varepsilon$ and $\delta$. 
It follows from Theorem~\ref{th:dh-random-variable-theorem} that for this it suffices to take any $N$:
\begin{equation}
    N \ge \frac{Var(\xi_B)}{\varepsilon^2 \cdot \delta \cdot \text{E}^2[\xi_B]}.
    \label{eq:dh-N-condition}
\end{equation}

Note that, unfortunately, in the general case we cannot say anything about $Var(\xi_B)$. 
Formally, this quantity is finite (since algorithm $A$ is complete). 
However, in practice it can be very large, and this fact is directly related to the heavy-tailed behavior phenomenon~\cite{GomesSabh2021}.

In computational experiments, when using specific values of the parameters $\varepsilon$ and $\delta$, 
we are forced to use the statistical estimates of $\ev{\xi_B}$ and $Var(\xi_B)$ given by formulas~\eqref{eq:sample-mean} and~\eqref{eq:sample-variance}.
Of course, in this case, unfortunately, we cannot speak about guarantees of the accuracy of the estimates obtained (for relatively small $N$), 
but this situation is in principle typical of many statistical experiments.

Taking into account all said above, when we compute the value of function~\eqref{eq:dh-fun} for a particular set $B \in 2^X$, 
we use the following statistical analogue of formula~\eqref{eq:dh-N-condition}:
\begin{equation}
    N \ge \frac{s^2(\xi_B)}{\varepsilon^2 \cdot \delta \cdot (\overline{\xi}_B)^2}.
    \label{eq:dh-N-condition-stat}
\end{equation}
More concretely, at the starting point we choose some relatively small value of $N$ (say, $N=1000$), construct a random sample and calculate $\overline{\xi}_B$ and $s^2(\xi_B)$
using formulas~\eqref{eq:sample-mean} and~\eqref{eq:sample-variance}.
For fixed $\varepsilon,\delta \in (0, 1)$ we check if inequality~\eqref{eq:dh-N-condition-stat} is satisfied.

If it is, we conclude that our estimation of $\mu_{B,A}(C)$ has the required accuracy.
If not, we augment our random sample with $N$ new observations of $\xi_{B}$, thus doubling the random sample size; after this, we recalculate $\overline{\xi}_B$ and $s^2(\xi_B)$. 
These steps are repeated until condition~\eqref{eq:dh-N-condition-stat} is satisfied.

It is important to note that, in general, the value of $\xi_B$ and the fitness function~\eqref{eq:dh-fun} cannot be calculated efficiently.
Indeed, for some small set $B$, the time needed to compute these values may well be comparable with the time needed to solve SAT for the original CNF formula $C$.
However, for any formula $C$ there exists a set $B$ such that the value of $\xi_B$ for any $\beta \in \{0,1\}^{|B|}$ is calculated efficiently.
The simplest and most obvious example is $B = X$.
Another case is when $B$ is a Strong Unit Propagation Backdoor Set~(SUPBS): an SBS w.r.t. the unit propagation rule~\cite{Williams03}; many SAT instances from such domains as, for example, bounded model checking and cryptanalysis, have small SUPBSes.

For practical problems with large $X$ (e.g. thousands of variables), efficient optimization of function~\eqref{eq:dh-fun} can be organized not on the entire set $2^X$, 
but only on $2^{B_0}$ for some $B_0 \subset X$, such that the function~\eqref{eq:dh-fun} is computed efficiently on $B_0$.
In our experiments, the set $B_0$ is built using efficient heuristics similar to the ones used in look-ahead SAT solvers~\cite{Heule09}. 
More details on this matter are given in Section~\ref{sec:search-space-reduction}. 

%\section{Used Metaheuristic Optimization Algorithm}
%\label{sec:metaheuristic-algorithms}

The objective function~\eqref{eq:dh-fun} does not have an analytical definition, so the only viable option for its optimization is the application of some black-box optimization algorithms, e.g. metaheuristic algorithms~\cite{Luke15}.
In this work, we use a variation of a genetic algorithm previously used in~\cite{evostar}.
The genetic algorithm operates with a population $\Pi_{\text{cur}}$ of vectors $\lambda_B$, and on each iteration the algorithm prepares a new population $\Pi_{\text{new}}$ called the offspring, ensuring that $|\Pi_{\text{cur}}| = |\Pi_{\text{new}}| = R$ for some fixed number $R$.
Population $\Pi_{\text{cur}}$ is associated with a distribution $\mathcal{D}_{\text{cur}} = \{p_1, \ldots, p_R\}$, where
\begin{equation*}
    p_i = \frac{1 / F_{A,C,N}(\lambda_{B_i})}{\sum\limits_{j=1}^R \left(1 / F_{A,C,N}(\lambda_{B_j})\right)},  i \in \{1,\ldots,R\}.
\end{equation*}

The new population $\Pi_{\text{new}}$ is constructed in the following way.
First, $E$ individuals from $\Pi_{\text{cur}}$ with the highest fitness function values are added to $\Pi_{\text{new}}$; this corresponds to the use of an elitist genetic algorithm~\cite{Luke15}.
Second, $G$ individuals are added in the following way:
select a pair of individuals from $\Pi_{\text{cur}}$, choosing each individual independently w.r.t the distribution $\mathcal{D}_{\text{cur}}$, apply to the selected pair the standard two-point crossover~\cite{Luke15}, add the resulting individuals to $\Pi_{\text{new}}$, and repeat until $G$ individuals are added.
Third, $H$ individuals are added in the following way:
select an individual from $\Pi_{\text{cur}}$ w.r.t. the distribution $\mathcal{D}_{\text{cur}}$, apply to it the $(1+1)$ heavy-tailed mutation operator from~\cite{Doerr17} with parameter $\beta = 3$, and repeat until $H$ individuals are added.
The procedure ensures that $R = E + G + H$.

\section{Search Space Reduction}
\label{sec:search-space-reduction}

Note that when using the method proposed in~\cite{CP2021} for some CNF formula $C$ over $X$ to seek some decomposition set $B$ as a subset of $X$, we can hope that our search will be successful only if the cardinality of $X$ does not exceed several hundred variables. 
This is the consequence of the fact that the function~\eqref{eq:dh-fun} is computationally costly to compute in the general case. 
As we said above, in many practical cases we can narrow down the search space (on which the function~\eqref{eq:dh-fun} is optimized) by studying the features of the original combinatorial problem which was reduced to the SAT instance under consideration: e.g., for SAT instances originating from verification or cryptanalysis problems, we can limit our search space to some SUPBS of the considered formula.

We can also build a reduced search space by utilizing well-known SAT solving techniques which are used for determining sets of promising (important, interesting) variables. 
First of all, we mean here the look-ahead heuristics used in look-ahead SAT solvers~\cite{Heule09} and Cube-and-Conquer strategy~\cite{CC2012}. 
In accordance with these heuristics, for each variable from $X$, some measure of its usefulness is calculated, which is referred to as a \emph{reduction measure}. 
In more detail, in the first look-ahead heuristics, the following approach was used: for an arbitrary variable $x \in X$ and for each literal from $\{x, \lnot x\}$, the number of so-called \emph{new clauses} was calculated; that is, such clauses that did not exist before in the original formula.

Each new clause is the result of stripping literals from the original clauses, if we consider formulas $x \wedge C$ and $\lnot x \wedge C$ and apply to the latter unit propagation and pure literal rules.
Thus, each literal from $\{x, \lnot x\}$ is assigned a natural number, which is the number of new clauses generated (in the sense described above) by this literal. 
The reduction measure of variable $x$ is usually defined as the product of these numbers, but in some cases their sum can be used (see details in~\cite{Heule09}).

It is easy to see that the time complexity of the algorithm for calculating reduction measures for all variables from $X$ is polynomial in $|C|$. 
Suppose that we have calculated the reduction measures for all variables from $X$. 
Let us sort in descending order the reduction measures of its elements,
take the first $m$ variables w.r.t. this order, and denote the resulting set by $B_0$. 
We select $m$ such that the value the function~\eqref{eq:dh-fun} for $B_0$ would be computed efficiently. 
If $B_0$ is such a set, then $2^{B_0}$ is the reduced search space for optimization of function~\eqref{eq:dh-fun}.

One of the results of our computational experiments is the fact that often comparable and even better results compared to standard look-ahead formation strategies can be achieved using the simpler heuristic described below, which can, therefore, be assigned to the same class of heuristics. 
For each variable $x_i \in X$ we first substitute the positive literal of $x_i$ into $C$ (using Unit Propagation) and calculate the total number of variables $w_i^+$ such that their values (literals) are derived by UP application. 
Then we repeat the procedure for the negative literal $\lnot x_i$, and calculate $w_i^-$ in the same way. 
Let us define the weight of the variable in the formula as $w_i = w_i^+ + w_i^-$. 
The initial decomposition set $B_0$, $B_0 \subset X$, is chosen as the set of $m$ variables with the largest values of $w_i$, also ensuring that the function~\eqref{eq:dh-fun} is efficiently calculated for~$B_0$.

\section{Additional Techniques: Accounting for Simple Subproblems and Using Several Backdoors}
\label{sec:additional}

The results of this section are inspired by the paper~\cite{AAAI2022}, in which the probabilistic approach to estimate the fraction of ``simple'', polynomially decidable subproblems $C[\beta/B]$ (among all possible such problems for some set $B$) was presented. 
More specifically, let $P$ be some polynomial sub-solver in the sense of~\cite{Williams03}. We will use the notation $C[\beta /B]\in S(P)$ if SAT for formula $C[\beta /B]$ is decided by $P$ and will write $C[\beta /B]\notin S(P)$ in the opposite case. 
Denote as $\rho$ the fraction of such vectors $\beta \in \{0,1\}^{|B|}$ for which $C[\beta /B]\in S(P)$ for some fixed set $B \subseteq X$. 
In more detail, let us use the following definition.

\begin{definition}[$\rho$-backdoor]
Let $C$ be some CNF formula over the set of variables $X$, and $P$ be some polynomial sub-solver. Then an arbitrary set $B$, $B \subseteq X$, is called a $\rho$-backdoor for $C$ w.r.t. $P$ if the fact $C[\beta /B]\in S(P)$ holds for at least $\rho \cdot 2^{|B|}$ vectors $\beta \in \{0,1\}^{|B|}$.
\label{def:rho-backdoor}
\end{definition}

Recall that when we search for a set $B$ associated with a minimal d-hardness value, we have several options of how to start the search: from the entire set $X$, from a SUPBS $B_{0} \subset X$, or its heuristically constructed analogue as described in the previous section.
Therefore, by construction of these initial sets, at least on the initial stages of the search, the subproblems $C[\beta/B]$ are solved efficiently.
Experiments showed that, apart from that, most of these subproblems are polynomially solvable: i.e. $B$ is some $\rho$-backdoor with $\rho$ which is quite close to~1. 
This observation allows for a simple, but important modification of the d-hardness objective function implementation.

Let us recall that in order for the measurements of the random variable $\xi_{B}$ to be independent,
we are forced to launch the SAT solver from scratch to measure its running time for each suproblem $C[\beta/B]$.
But if many subproblems are simple, we can keep an additional SAT solver (restricted to the use of unit propagation only) that will be used to check if $C[\beta/B] \in S(P)$. 
This solver can be used incrementally through the assumptions mechanism, and the resulting 
workload measures of this polynomial solver (time used for propagation, number of unit propagations, etc.) can be used as corresponding measurements of random variable $\xi_{B}$.

% Another contribution of~\cite{AAAI2022} consists in the idea to use several sets (backdoors) with $\rho$ close to 1 to increase the efficiency of solving SAT. 

The concept of $\rho$-backdoors from~\cite{AAAI2022} is also interesting in the sense that we can use several such sets to solve a hard SAT instance. 
Indeed, let $C$ be some hard SAT instance and let $B_1, \ldots, B_s$ be different $\rho$-backdoors.
Thus, for any $i \in \{1, \ldots, s\}$ we have that for $\rho_i \cdot 2^{|B_i|}$ formulas $C[\beta/B_i]$ the following holds: $C[\beta/B_i] \in S(P)$ for some polynomial sub-solver $P$. 
For each $i \in \{1, \ldots, s\}$ denote by $\Gamma_i$ the set of all $\beta \in \{0, 1\}^{|B_i|}$ such that $C[\beta/B_i] \notin S(P)$, 
and consider the Cartesian product: $\Gamma = \Gamma_1 \times \cdots \times \Gamma_s$. 
Denote as $C[\gamma]$ the CNF formula obtained by substitution of an arbitrary $\gamma \in \Gamma$ to formula $C$. 
For each $\gamma \in \Gamma$ let us apply a complete SAT solver to formula $C[\gamma]$. 
Note that $C$ is unsatisfiable if and only if all formulas $C[\beta/B_i] \in S(P)$ for all $i \in \{1, \ldots, s\}$ are unsatisfiable,
and all formulas $C[\gamma]$ for all $\gamma \in \Gamma$ are unsatisfiable.

In the context of the above, let us emphasize that in order to prove the unsatisfiability of $C$, one can invoke some algorithm with polynomial runtime 
$\sum_{i = 1}^s \rho_i \cdot 2^{|B_i|}$ times to solve the ``simple'' subproblems, and apply a complete SAT solver to
$2^{\sum_{i=1}^s |B_i|} \cdot \Pi_{i = 1}^s \left (1-\rho_{i} \right)$ 
formulas $C[\gamma]$, $\gamma \in \Gamma$. 
It should be noted that for each $\gamma \in \Gamma$, the formula $C[\gamma]$ is obtained as a result of substitution of $\sum_{i=1}^s |B_i|$ bits into $C$, 
and such a substitution can significantly simplify the original SAT instance $C$. 
If for each $i \in \{1, \ldots, s\}$ the value $\rho_i$ is close to 1, then the time to prove the unsatisfiability of $C$ using $\rho$-backdoors $B_1, \ldots, B_s$, can be significantly lower than the runtime of SAT solver $A$ in application to the original formula $C$. 
Also, in practice we see that the vast majority of formulas $C[\gamma]$ are polynomially solvable.

\section{Unsatisfiability Proofs and Backdoors}
\label{sec:proofs}

Most CDCL SAT solvers have an option to produce an unsatisfiability proof, apart from the actual solving of a SAT formula. 
An unsatisfiability proof is produced on the basis of conflict clauses learned by the SAT solver~\cite{Goldberg03}. 
A satisfiability certificate, i.e. a satisfying assignment, can be checked in polynomial time. 
On the contrary, checking an unsatisfiability proof is a much more computationally expensive procedure, since the size of the proof can be exponential in the size of the formula. 
In many cases, checking a proof may even take more time than the actual SAT solving. 
Therefore, a way to reduce unsatisfiability proof checking time is needed.

It is worth to note that the decomposition sets generated by the approach proposed in this paper provide a natural way to facilitate faster unsatisfiability proof checking. 
Indeed, a decomposition set $B$ provides a partitioning of the initial CNF formula $C$ into $2^{|B|}$ smaller formulas $C[\beta/B]$.
If each of these weakened formulas is solved independently, then $2^{|B|}$ unsatisfiability proofs $\mathcal{P}[\beta/B]$
for the corresponding formulas $C[\beta/B]$ are generated. 
These proofs may be checked independently and in parallel. 
For checking the proofs we used the well-known tool DRAT-trim\footnote{\url{https://github.com/marijnheule/drat-trim}}~\cite{drat-trim-1,drat-trim-2}.

Let us also note that using a set $B$ for speeding up unsatisfiability proof checking looks especially attractive in the case when $B$ is some $\rho$-backdoor with $\rho$ close to 1. 
Indeed, in this case we can split the proof verification for formulas $C[\beta/B]$, $\beta \in \{0, 1\}^{|B|}$, into the following stages:
\begin{itemize}
    \item verify proofs for the $(1 - \rho) \cdot 2^{|B|}$ formulas $C[\beta/B]$: $C[\beta/B] \notin S(P)$;
    \item verify proofs for the remaining $\rho \cdot 2^{|B|}$ formulas $C[\beta/B]$: $C[\beta/B] \in S(P)$. 
\end{itemize}

In the case of formulas of the first type, we separately generate and store the proof $\mathcal{P}[\beta/B]$ for each such formula. 
For realistic practical sizes of set $B$, say, $|B| < 15$, the number of such ``subproofs'' typically does not exceed 1000.

Since the unsatisfiability of any formula of the second type is proved only by UP, they all have short proofs. 
However, the number of these can be quite significant due to the fact that $\rho$ is close to 1.
We propose the following approach to produce, store, and check proofs for simple subformulas $C[\beta/B] \in S(P)$.
The idea is to split the set of assignments that lead to simple subformulas $C[\beta/B]$ into $K$ ($K \ll 2^{|B|}$) subsets $Q_1, \ldots, Q_K$, $Q_k = \{\beta^1_k, \ldots, \beta^{d_k}_k\}$, $k \in \{1, \ldots, K\}$.
For all assignments from the subset $Q_k$ we construct a CNF formula that is satisfiable if and 
only if (1) the original CNF formula $C$ is satisfiable and (2) values of variables from $B$ are limited to combinations defined by $Q_k$.
Obviously, we can connect with each $\beta \in \{0, 1\}^{|B|}$ some cube $\sigma(\beta)$; recall (see e.g.~\cite{CC2012}) that a cube is formula which is dual to a clause and, thus, it is conjunction of some literals.
Let $\sigma_1^k, \ldots, \sigma^k_{r_k}$ be all cubes corresponding to assignments from $Q_k$ ($|Q_k| = r_k$).
Let us introduce new Boolean variables $u_1, \ldots, u_{r_k}$ and consider the following formula:
\begin{equation}
    C \wedge (u_1 \equiv \sigma_1^k) \wedge \cdots \wedge (u_{r_k} \equiv \sigma^k_{r_k}) \wedge (u_1 \vee \cdots \vee u_{r_k}).
    \label{eq:cubes}
\end{equation}

It is easy to understand that the formula~\eqref{eq:cubes} is unsatisfiable if and only if all formulas $C[\beta/B]$, $\beta \in Q_k$ are unsatisfiable. 
Using Tseitin transformations~\cite{Tse70}, we switch from SAT for the formula~\eqref{eq:cubes} to SAT for some CNF formula $C_k$.

In general, for each formula and backdoor one may select an optimal value of parameter $K$ so that the total proof checking time would be minimal.
A typical case we observed in our experiments is illustrated in Fig.~\ref{fig:check-easy-g3} for SAT solver Glucose~3, one of the SAT instances used in~\cite{CP2021} ($PvS_{7, 4}$), and ten different backdoors found with one of the algorithms.
Each line in the plots corresponds to experimental data for one particular backdoor.
The left plot in Fig.~\ref{fig:check-easy-g3} shows the SAT solving time for problems $C[\beta/B] \in S(P)$ for different values of $K$, the middle plot shows the time used for proof checking with DRAT-trim, and the right plot depicts the proof sizes.
We see that the minimal values of all these parameters are reached for $K$ around 10--20 for the studied SAT instances; moreover, the plots clearly show that solving all these subproofs individually  is infeasible.
Therefore, in our experiments we used $K=20$ for all formulas.

\begin{figure}[h]
    \centering
    \includegraphics[width=\textwidth]{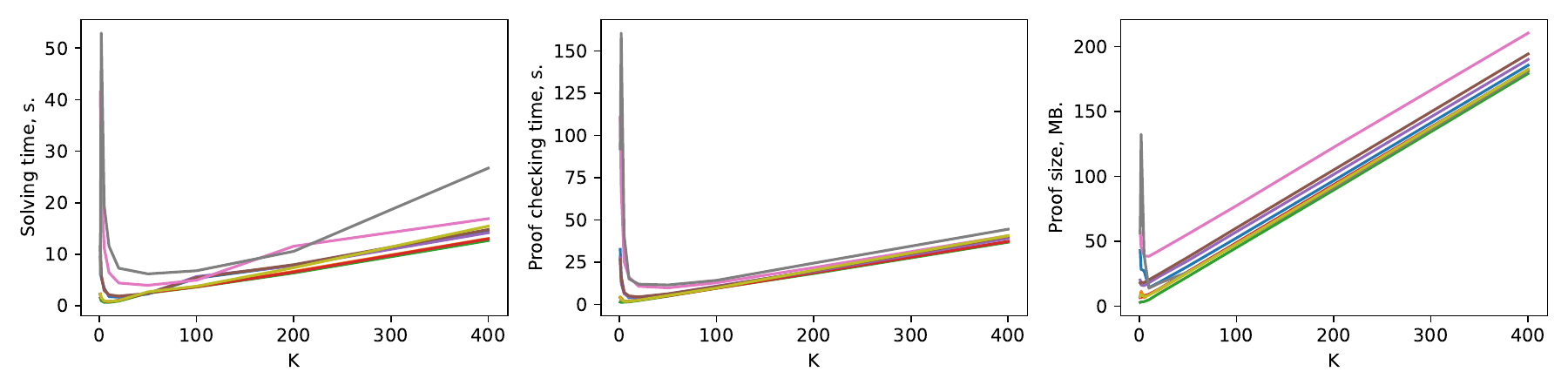}
    \caption{Proving unsatisfiability and checking proofs for simple subproblems}
    \label{fig:check-easy-g3}
\end{figure}

\section{Experiments}
\label{sec:experiments}

We performed computational experiments in order answer the following research questions.

\emph{RQ1}. Which measure of solver workload is best for d-hardness estimation: time in seconds, number of unit propagations, or number of conflicts?

\emph{RQ2}. Can the described techniques improve generation of sets $B$ that would allow solving SAT instances faster than by just using a SAT solver? 

\emph{RQ3}. Can the proposed decomposition approach decrease the time needed to check unsatisfiability proofs?

\subsection{Benchmarks}

We conducted our computational experiments using test instance classes which were used in the paper~\cite{CP2021}. 
One of them is the well-known class of unsatisfiable formulas constructed using the \emph{sgen} generator~\cite{Spence15}. 
The second one is a class formed by SAT encodings of some examples of the Logical Equivalence Checking~(LEC) problem. 
In more detail, the paper~\cite{CP2021} considered problems in which one has to prove the equivalence of pairs of Boolean circuits. 
The considered circuits represent different algorithms performing sorting of arbitrary $k$ natural numbers of length $l$-bit in binary form. 
Three sorting algorithms were considered: Bubble sort, Selection sort~\cite{Cormen90}, and Pancake sort~\cite{Gates79}. 
Corresponding instances are denoted as $BvP_{k,l}$ for ``Bubble vs. Pancake'' test, $BvS_{k,l}$ for ``Bubble vs. Selection'', and $PvS_{k,l}$ for ``Pancake vs. Selection''. 
For example, the CNF formula with name $PvS_{7,4}$ encodes the LEC problem for two circuits $S_1$, $S_2$, which perform sorting of seven arbitrary 4-bit numbers, where $S_1$ implements Pancake sort and $S_2$ implements Selection sort.   

\subsection{Solver workload measures for d-hardness estimation}
The first set of experiments is dedicated to answering RQ1.
We fixed the SAT instance, evolutionary algorithm, time limit, and other parameters, and only varied the solver workload measure used for d-hardness estimation: namely, the number of unit propagations, the number of conflicts, or the execution time (in seconds) of the SAT solver.
We then ran the experiments, making 20 independent runs with each algorithm setup, and compared the results (sets $B$) according to one simple parameter:
the time needed to solve the formula using the found backdoor $B$.

The experiments were set up as follows.
We considered the SAT instance $PvS_{7,4}$ (see its characteristics in Section~\ref{sec:main-experiments}).
For each SAT solver (\emph{CaDiCaL}, \emph{Glucose~3}) and each measure of solver workload (propagations, conflicts, time) we did 20 independent runs of the genetic algorithm (with parameters $G = 8, E = 2$) using an initial random sample size $N = 1000$.
Each run was limited to one hour using 16 parallel threads on a computer with a 32-core Intel(R) Xeon(R) Gold 6242 CPU @ 2.80~GHz.

For each run, we selected the backdoor set $B$ with the smallest found d-hardness estimate, and solved the formula $C$ using this set $B$.
As a result, for each set of experiments we have a sample of 20 execution times corresponding to 20 different sets $B$.

Violin plots of these samples are shown in Fig.~\ref{fig:measures-violins}: for \emph{CaDiCaL} on the left, and for \emph{Glucose~3} on the right.
\begin{figure*}
    \centering
    \includegraphics[width=\textwidth]{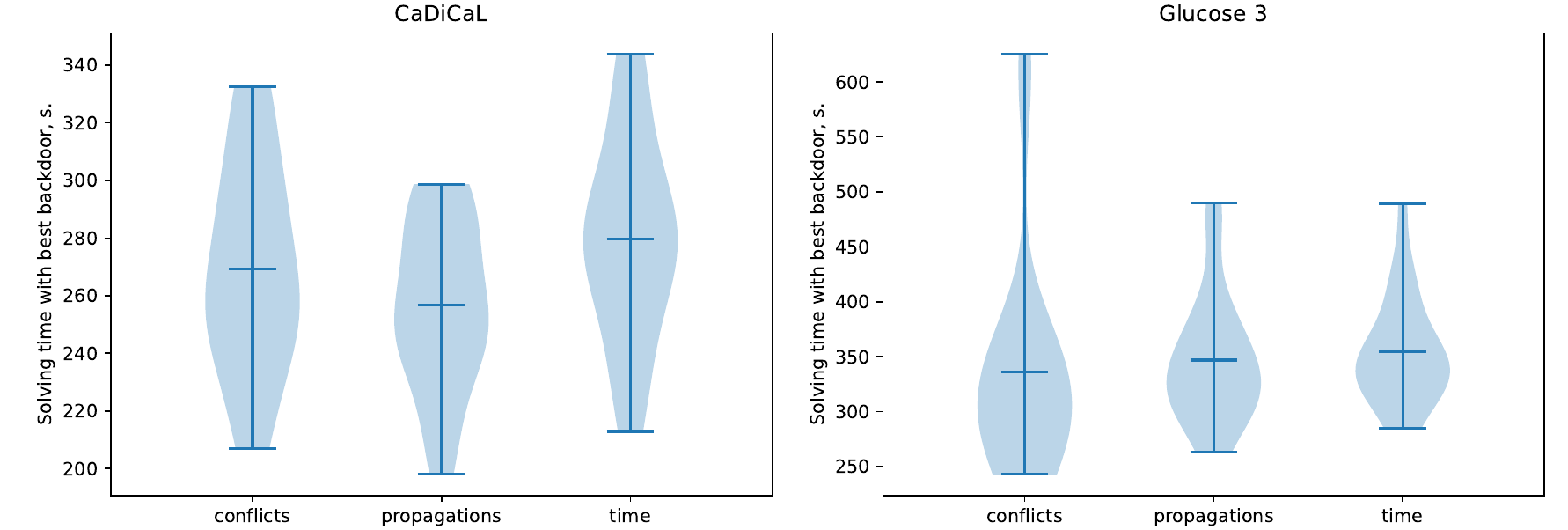}
    \caption{SAT solving times for instance $PvS_{7,4}$ with solvers CaDiCaL (left) and Glucose~3 (right) using decomposition sets $B$ constructed using different solver hardness measures (conflicts, propagations, time)}
    \label{fig:measures-violins}
\end{figure*}
From the data in Fig.~\ref{fig:measures-violins} we see that there is no considerable difference in the results for different measures.
To verify this conclusion, we used the Mann-Whitney U-test~\cite{MannWhitney}, which showed that for both solvers (1) results with propagations and conflicts are statistically indistinguishable, and (2) results with time are statistically worse than with propagations (p-value equals $0.043$ for significance level $0.05$).
We thus conclude for RQ1 that one may use either propagations or conflicts to get good resulting d-hardness estimates.
In order to be consistent with the results from~\cite{CP2021}, in all further experiments we used the number of unit propagations as the solver workload measure for d-hardness estimation.

\subsection{Incremental preprocessing: using simple subproblems for faster d-hardness estimation}

In this section we describe the results of experiments in which we took into account ``simple'' subproblems $C[\beta /B]\in S(P)$, where $P$ is some polynomial-time sub-solver (see Section~\ref{sec:additional}). 
Also in these experiments we used different variants of the reduced search space $B_0 \subset X$ w.r.t. techniques described in Section~\ref{sec:search-space-reduction}.

To check the impact of the aforementioned techniques on the efficiency of the search procedure, we ran experiments with d-hardness for $PvS_{7,4}$ using unit propagations as the hardness measure, running the genetic algorithm ($G = 8, E = 2$) with a time limit of one hour, comparing runs of the algorithm with and without these modifications.

Plots in Fig.~\ref{fig:incremental-gad-full} on left demonstrate how the algorithms with and without incremental preprocessing behave when run starting from the entire set $X$.
It is evident that this modification helps the original algorithm from~\cite{CP2021} descend to solutions with smaller d-hardness estimations faster, and explore more backdoors (see plot on the right).
However, the effect is not sufficient: in one hour, even the algorithm that uses incremental preprocessing finds sets $B$ with more than 2000 variables, which cannot be used in practice to solve SAT for the considered formula.

Plots in Fig.~\ref{fig:incremental-gad} show similar results, but when both algorithms are initialized with the initial heuristically constructed set $B_0$, $|B_0| = 200$, and run with a time limit of 30 minutes.
The figure on the left demonstrates that the algorithm that uses incremental preprocessing begins encountering ``hard'' subproblems $C[\beta/B] \notin S(P)$ only after reaching
relatively small sets $B$ with $|B|$ of about 20...25, and for all sets $B$ encountered before it turned to be $C[\beta /B]\in S(P)$ for all $\beta$'s in the corresponding random sample (here, the Unit Propagation rule is used as $P$).

\begin{figure}[t]
    \centering
    \includegraphics[width=\textwidth]{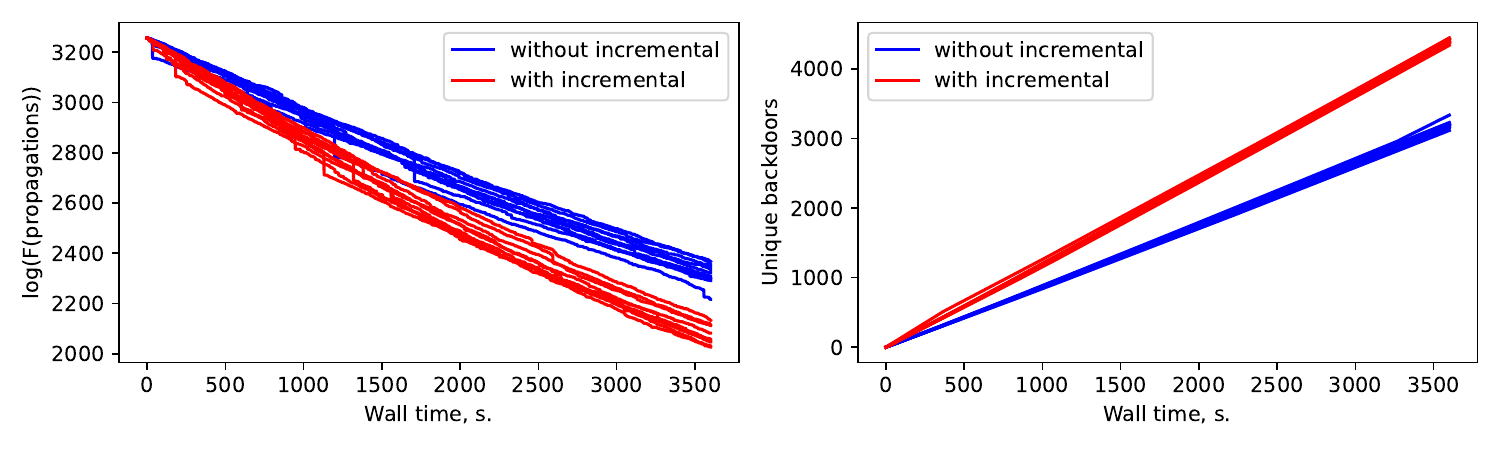}
    \caption{Effect of using UP (via incremental SAT solving) as a preprocessor in d-hardness function estimation when starting the search from the entire set of variables $X$}
    \label{fig:incremental-gad-full}
\end{figure}

\begin{figure}[t]
    \centering
    \includegraphics[width=\textwidth]{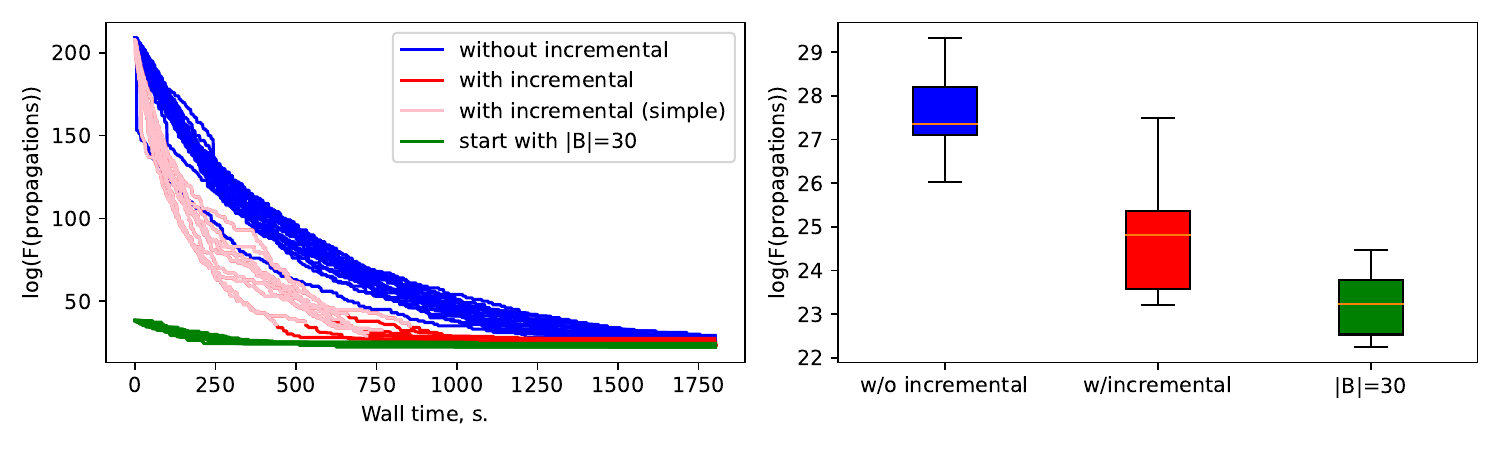}
    \caption{Effect of using UP (via incremental SAT solving) as a preprocessor in d-hardness function estimation when starting the search from a heuristically constructed set $B_0$, $|B_0| = 200$}
    \label{fig:incremental-gad}
\end{figure}

The observation that hard subproblems only start to appear for smaller backdoors  also leads to the idea of using a sufficiently large set $B_0$ (say, $|B_0| = 200$ variables) as described in Section~\ref{sec:search-space-reduction}, but initializing the algorithm with much smaller initial solutions (say, $|B| = 30$).
This way, the search space is still limited to $B_0$, but the solutions are initialized in such a way that the algorithm soon starts encountering hard subproblems.
The green plots in Fig.~\ref{fig:incremental-gad} show results for the algorithm that is initialized in this way.
Note also the plot in Fig.~\ref{fig:incremental-gad} on the right: it shows the final distributions of d-hardness estimations found by the three compared algorithms.
It is clearly evident that the best results are achieved when using incremental preprocessing and starting the search from a set $B$, $|B| = 30$.

As additional motivation for using small initial sets $B$, consider the plots in Fig.~\ref{fig:subps}:
the left plot shows the value of the $w^i$ heuristic weight introduced in Section~\ref{sec:search-space-reduction} for the 600 variables with the highest values of this heuristic for the formula $PvS_{7, 4}$; the right plot shows the value of parameter $\rho$ estimated (with sample size $N = 10000$) for a backdoor $B$ that is built from the best $1, 2, 10, 100, \ldots, 600$ variables (according to $w_i$).
From the right plot we see that the value of $\rho$ becomes equal to 1.0 already for $|B| = 20$.
This means that we can safely start the d-hardness estimation algorithms from sets $B$ of about this size, and the corresponding subproblems $C[\beta/B]$ will be solved efficiently.
The estimation of the reasonable cardinality of initial solutions can be done simultaneously with the search space reduction procedure.

\begin{figure}
    \centering
    \includegraphics[width=\textwidth]{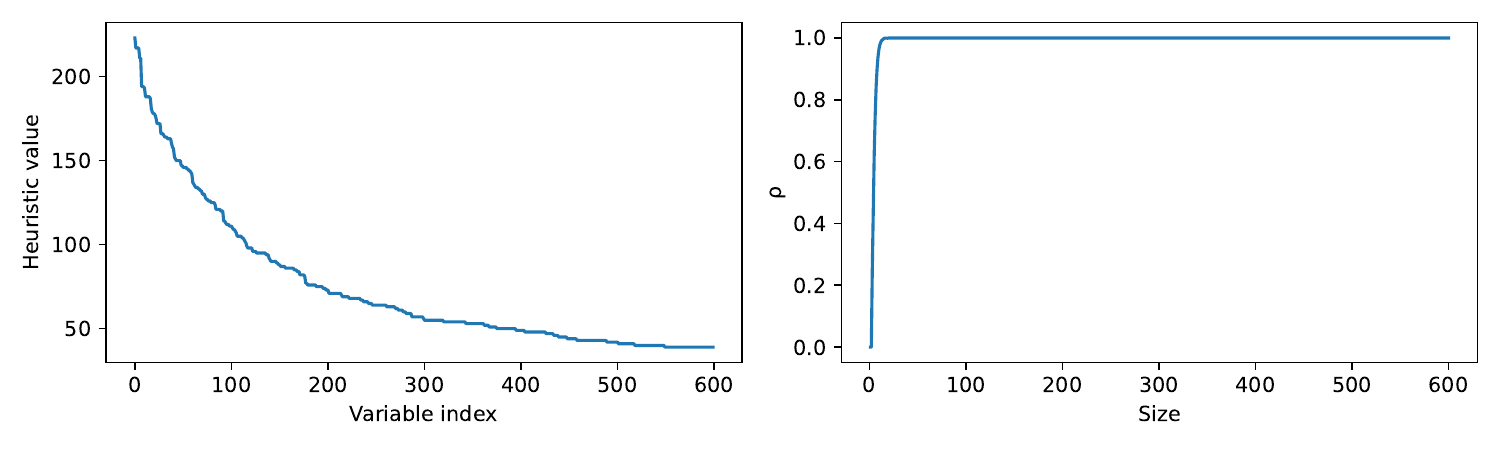}
    \caption{Selecting initial set size for $PvS_{4,7}$}
    \label{fig:subps}
\end{figure}

\subsection{Main experiments}
\label{sec:main-experiments}

In the main computational experiments, we launched the modified d-hardness algorithm with incremental preprocessing augmented with the space reduction heuristic ($|B_0| = 200$) and initialization with sets $B$ of size 30.
Here we do not consider the original d-hardness approach~\cite{CP2021}, since we already showed above that it cannot compete with the optimized version.

Each run of the algorithm on each SAT instance with each SAT solver was repeated 10 times with a time limit of one hour using 16 threads on a computer with a 32-core Intel(R) Xeon(R) Gold 6242 CPU @ 2.80~GHz.
The d-hardness estimation algorithm used a sample size of $N = 1000$.

After each run, we determined the best resulting set $B$ (according to the found d-hardness estimation value), and solved the corresponding SAT formula using this set.
The solving experiments were run on a computer with a 32-Core Intel(R) Xeon Gold 6338 CPU @ 2.00~GHz.
Each such experiment was repeated five times, and we took the average execution time as the result to mitigate possible variation in running time due to operating system effects (the standard deviation of execution times for one backdoor did not exceed $5\%$).
In addition, we stored, solved, and checked the unsatisfiability proofs with DRAT-trim.

\begin{table}[t]
    \centering
    \begin{tabular}{cccccccc}
    \toprule
    \multirow{3}{*}{Instance} & \multirow{3}{*}{$|X|$} & \multicolumn{3}{c}{CaDiCaL (cd)} & \multicolumn{3}{c}{Glucose 3 (g3)}\tabularnewline
     \cmidrule(lr){3-5} \cmidrule(lr){6-8}
    & & \multirow{2}{*}{Solve, s} & \multicolumn{2}{c}{Proof} & \multirow{2}{*}{Solve, s} & \multicolumn{2}{c}{Proof}\tabularnewline 
    \cmidrule(lr){4-5} \cmidrule(lr){7-8}
    & & & Size, MB & Check, s & & Size, MB & Check, s\tabularnewline
    \midrule
    $PvS_{7,4}$ & 3244 & 460 & 679 & 1601 & 986 & 1803 & 3206\tabularnewline
    $BvP_{7,6}$ & 3492 & 487 & 561 & 1199 & 1480 & 2481 & 6644\tabularnewline
    $BvS_{7,7}$ & 4462 & 837 & 814 & 1617 & 3559 & 4596 & 15084\tabularnewline
    $BvP_{8,4}$ & 3013 & 1488 & 1510 & 5647 & 3598 & 4704 & 21503\tabularnewline
    $sgen$ & 150 & 2022 & 2332 & 5898 & 8404 & 12823 & 37724\tabularnewline
    \bottomrule
    \end{tabular}
    \caption{Data on the considered CNF formulas and SAT solvers CaDiCaL~(cd) and Glucose~3~(g3): solving times, unsatisfiability proofs sizes, and proof checking times}
    \label{tab:original}
\end{table}

In this main experimental section, we used five SAT instances which were previously used for benchmarking the approach proposed in~\cite{CP2021}.
Table~\ref{tab:original} contains data on solving these formulas with SAT solvers CaDiCaL and Glucose~3, and also checking the generated unsatisfiability proofs with DRAT-trim.

The experimental results are summarized in Table~\ref{tab:main}.
For each instance $C$ and SAT solver $A$ the table shows:
\begin{itemize}
    \item the average size of the found set $B$;
    \item for the case of using a single backdoor to solve the SAT formula: 
    \begin{itemize}
        \item the average and relative standard deviation of the ratio $r_{B,A}$, where each ratio is calculated as the time used to solve the formula using a decomposition generated by set $B$ divided by the time used to solve it with solver $A$ without decomposition;
        \item a similar ratio $\pi_{B,A}$, but calculated based on proof checking time;
    \end{itemize}
    \item for the case of using a pair of backdoors to solve the SAT formula: similar measures $r_{\Gamma,A}$ and $\pi_{\Gamma,A}$, where $\Gamma$ is the set of ``hard'' subproblems from the corresponding Cartesian product.
\end{itemize}

\begin{table}[htbp]
    \centering
    \begin{tabular}{cccccccc}
    \toprule
    \multirow{2}{*}{$C$} & \multirow{2}{*}{$A$} & \multirow{2}{*}{Avg. $|B|$} & \multicolumn{2}{c}{Single backdoors} & \multicolumn{2}{c}{Pairs of backdoors}\tabularnewline
    \cmidrule(lr){4-5} \cmidrule(lr){6-7}
    & & & $r_{B,A}$ & $\pi_{B,A}$ & $r_{\Gamma,A}$ & $\pi_{\Gamma,A}$\tabularnewline
    \midrule
    \multirow{2}{*}{$PvS_{7, 4}$} & cd & $14.5$ & $0.41 \pm 0.11$ & $0.30 \pm 0.16$ & $0.39 \pm 0.11$ & $0.22 \pm 0.12$\tabularnewline
    & g3 & $14.3$ & $0.33 \pm 0.14$ & $0.32 \pm 0.21$ & $0.28 \pm 0.12$ & $0.22 \pm 0.10$\tabularnewline
    \cmidrule(lr){1-3} \cmidrule(lr){4-5} \cmidrule(lr){6-7}
    \multirow{2}{*}{$BvP_{7, 6}$} & cd & $13.3$ & $0.79 \pm 0.12$ & $0.57 \pm 0.14$ & $0.74 \pm 0.11$ & $0.47 \pm 0.13$\tabularnewline
     & g3 & $14.1$ & $0.48 \pm 0.17$ & $0.32 \pm 0.26$ & $0.41 \pm 0.14$ & $0.23 \pm 0.21$\tabularnewline
    \cmidrule(lr){1-3} \cmidrule(lr){4-5} \cmidrule(lr){6-7}
    \multirow{2}{*}{$BvS_{7, 7}$} & cd &$13.8$ & $0.80 \pm 0.10$ & $0.88 \pm 0.14$ & $0.70 \pm 0.12$ & $0.66 \pm 0.11$\tabularnewline
     & g3 & $14.4$ & $0.23 \pm 0.08$ & $0.13 \pm 0.13$ & $0.21 \pm 0.11$ & $0.10 \pm 0.14$\tabularnewline
     \cmidrule(lr){1-3} \cmidrule(lr){4-5} \cmidrule(lr){6-7}
     \multirow{2}{*}{$BvP_{8, 4}$} & cd & $13.9$ & $1.09 \pm 0.06$ & $0.80 \pm 0.10$ & $0.93 \pm 0.06$ & $0.51 \pm 0.11$\tabularnewline
      & g3 & $14.4$ & $0.72 \pm 0.06$ & $0.49 \pm 0.12$ & $0.60 \pm 0.08$ & $0.31 \pm 0.12$\tabularnewline
      \cmidrule(lr){1-3} \cmidrule(lr){4-5} \cmidrule(lr){6-7}
      \multirow{2}{*}{$sgen$} & cd & $19.0$ & $0.20 \pm 0.08$ & $0.12 \pm 0.15$ & -- & -- \tabularnewline
      & g3 & $19.6$ & $0.05 \pm 0.14$ & $0.01 \pm 0.29$ & -- & -- \tabularnewline
      \bottomrule
    \end{tabular}
    \caption{Results with solving and checking unsatisfiability proofs via decompositions using one or two backdoors}
    \label{tab:main}
\end{table} 

Looking at the results in Table~\ref{tab:main}, we note that for the vast majority of cases,
the proposed algorithm found sets $B$ that allow solving the original CNF formula faster than just by applying a SAT solver. 
Indeed, for most cases, $r_{B,A} < 1$.
Also, the corresponding ratio for proof checking $\pi_{B,A}$ is smaller than $1$ in all cases.
Moreover, solving SAT and checking proofs with the use of a pair of backdoors is, on average, always faster 
than with one backdoor: indeed, for all formulas $C$ we have $r_{\Gamma,A} < r_{B,A}$ and $\pi_{\Gamma,A} < \pi_{B,A}$.
What is also interesting, the reduction in proof checking time is typically even more significant than the reduction of the solving time, i.e. in most cases we have $\pi_{B,A} < r_{B,A}$ and $\pi_{\Gamma,A} < r_{\Gamma,A}$.
This is a promising result, since the proof checking times for the original monolithic proofs are always (in our experiments) larger than the SAT solving times (see Table~\ref{tab:original}).
So, we can give a positive answer to both RQ2 and RQ3: the proposed approach allows decreasing both the time needed to solve SAT formulas and the time needed to check the corresponding unsatisfiability proofs.

Note that in all experiments above, the efficiency of SAT solving with found sets $B$ was considered w.r.t. sequential SAT solving, i.e. all problems were solved using one SAT solver on one CPU.
However, we emphasize that the nature of SAT decompositions implies the possibility to solve the generated subproblems $C[\beta/B]$ (or $C[\gamma]$) independently and in parallel.
In this work, we did not run such experiments, but instead simulated the potential parallel speedup using a simple queue-based algorithm: all subproblems are put in a queue, and ``solved'' by a pool of workers.
As a result, this algorithm outputs the wall-clock time when the last subproblem is solved.
Thus, this time may be used as an estimate of how the simplest parallel SAT solving strategy (i.e. without clause sharing and other well-known parallel SAT solving techniques) that uses one or several sets $B$ might work like.

The corresponding speedups for several considered formulas w.r.t. the solver Glucose~3 are depicted in Fig.~\ref{fig:speedups} for the cases of using $1..32$ parallel workers.
Here, in addition to results in Table~\ref{tab:main} for single backdoors and pairs of backdoors, we additionally considered triples of backdoors.
Note that, for all cases, the more backdoors we use, the larger the speedup becomes.
This is explained by the result that when we, e.g. use a pair of backdoors instead of a single backdoor, we, on the one hand, get more hard problems to solve, but on the other hand, these problems are typically simpler, and are  solved faster.
Analysis on these experiments shows that the best results can be achieved when using triples of backdoors.

\begin{figure}[htbp]
    \centering
    \includegraphics[width=0.6\textwidth]{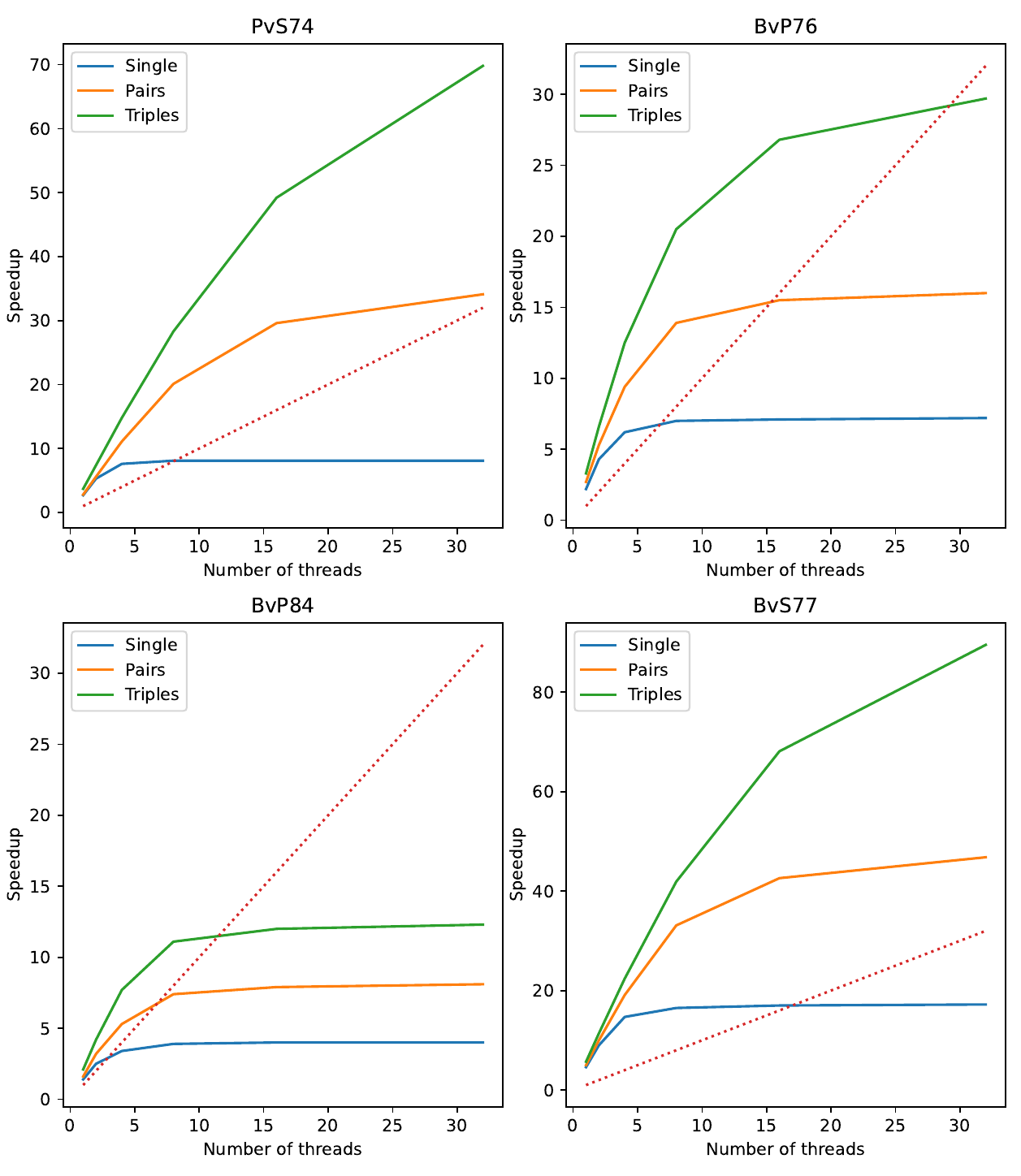}
    \caption{Speedups for the case of simulating parallel SAT solving with a set of independent Glucose~3 solvers}
    \label{fig:speedups}
\end{figure}

\section{Related Work}
\label{sec:related-work}

It is hardly possible to accurately indicate the paper in which the decompositions of SAT were first built, since the very idea to decompose some hard combinatorial problem into a family of problems of lower dimensionality is very natural. 
Apparently, in~\cite{Hyvar06} the SAT Partitioning conecept was introduced, which can be considered as a special type of decomposition of a Boolean formula. 
The Partitioning approach was further developed in~\cite{Hyvar10,Hyvar11,HyvarPhD} and a number of subsequent articles. 
A good overview of the various methods for solving SAT in parallel is given in~\cite{HyvarPhD}. 
Later, the idea of using lookahead strategies to build SAT Partitionings with their subsequent processing by CDCL SAT solvers was developed within the framework of an approach called Cube-and-Conquer~\cite{CC2012}. 
It was the use of Cube-and-Conquer that made it possible to solve a number of hard combinatorial problems and even prove/refute some mathematical hypotheses using massive parallel computations~\cite{KonLisitsa14,Heule16,Heule18}.

The ideas of decomposing hard SAT instances have proven to be extremely fruitful in application to cryptanalysis. 
G.~Bard in~\cite{Bard09} notes that SAT solvers are a very efficient tool for reducing the number of solutions that need to be enumerated and, accordingly, can be used to build various cryptographic attacks. 
One of the most numerous classes of attacks of this type are guess-and-determine attacks. 
The basic idea of such an attack is to decompose the formula which encodes the considered cryptographic problem into a family of significantly simpler formulas with the expectation that the total number of calculations will be significantly lower than for a brute force attack. 
Over the past 20 years, a large number of papers have been published in which SAT solvers have been applied to cryptanalysis problems. 
We mention here only the most significant articles on this topic.
One of the first works in which a cryptanalysis problem was reduced to SAT is the article~\cite{Mass2000}. 
A guess-and-determine attack using SAT solvers for the round-reduced DES cipher was constructed in~\cite{CourtBard07}. 
In~\cite{SZBP11}, a completely practical SAT-based guess-and-determine attack was presented for the well-known stream cipher A5/1, which has been used in the GSM standard for cell traffic encryption for many years. 
This attack was implemented in the voluntary computing project SAT@home~\cite{SAT-at-home-2012}. 
The paper~\cite{Soos09} describes the \emph{cryptominisat} SAT solver, a number of technical details of which, according to its author, were designed specifically for cryptanalysis problems. 
In the articles~\cite{Eibach08,Soos09,SZ15,SZ16} and a number of others, SAT-based attacks (including guess-and-determine ones) were built for a number of stream ciphers, among which was the Bivium cipher, which is a weakened version of the well-known lightweight stream cipher Trivium~\cite{Cann2006} (which became one of the winners of the eSTREAM contest in 2008). 
The article~\cite{SZOKI18} described a method for automatically constructing SAT-based guess-and-determine attacks applicable to a wide class of cryptographic functions. 
Also in this work, guess-and-determine attacks were built for a number of ciphers, which at that time were the best known ones. 
Attacks based on the use of SAT solvers for cryptographic hash functions of the MD family have been described in~\cite{Mironov2006,De2007,Gribanova2018,Gribanova20,Zaikin2022}.

It is easy to understand that we can always build a trivial decomposition of formula $C$ over $X$, breaking it down into $2^{|X|}$ formulas by substituting all possible values of variables from $X$ to $C$. 
SAT for each such formula is solved in time that is linear in $|C|$. 
As we said above, there are many examples of decompositions of the original problem into $2^{|B|}$ subproblems, where $B \subset X$, $|B| \ll |X|$, in which any problem from the decomposition is solved by some polynomial algorithm $P$.
In this case, $B$ is a Strong Backdoor Set (SBS) for $C$ w.r.t. algorithm $P$. 
This concept was introduced in~\cite{Williams03}. 
In subsequent years, the backdoors topic was actively developed in many directions, including parameterized complexity~\cite{Fichte2015,Gaspers2012a,FPTBook12,Gaspers2012c,Misra2013,Hemaspaandra2021}.

It was emphasized in~\cite{Ansoteg08} that knowing an SBS for some Boolean formula automatically gives some upper estimate of the hardness of that formula. 
Also~\cite{Ansoteg08} explored several tree-like measures of hardness in the context of SAT and demonstrated their relationship with backdoor hardness.

In the article~\cite{CP2021}, based on the analysis of the ideas from~\cite{Ansoteg08}, we introduced the concept of decomposition hardness or d-hardness of a SAT instance (w.r.t. some deterministic and complete SAT solving algorithm). 
The key idea behind d-hardness is that we can associate with an arbitrary decomposition set $B \subseteq X$ a special random variable $\xi_B$ with finite expectation and variance, and get an upper bound for the hardness of $C$ in the form $2^{|B|} \cdot \ev{\xi_B}$, where $\ev{\xi_B}$ is the expected value of $\xi_B$.
This approach makes it possible to evaluate the hardness of SAT instances through evaluation of $\ev{\xi_B}$, and the Monte Carlo method can be used for this purpose. 
As noted in~\cite{CP2021}, the estimates of d-hardness obtained in this way can, in theory, be made arbitrarily accurate by increasing the volume $N$ of the random sample.
However, in practice, an issue with the accuracy of estimates can arise due to the large variance of the random variable $\xi_B$, which, in turn, is a consequence of the heavy-tailed behavior phenomenon of modern complete SAT solvers~\cite{GomesSabh2021}.

The probabilistic generalization of the SBS notion introduced in~\cite{AAAI2022} has a number of attractive properties. 
The basic idea of such a generalization is that some polynomial algorithm $P$ is allowed to solve not all problems in the family obtained as a result of decomposition over the set $B$, but only some fraction $\rho$ of such problems (in this case, $B$ is called a $\rho$-backdoor). 
Of practical value are $\rho$-backdoors of low cardinality with the value of $\rho$ close to 1. 
In the case when $\rho = 1$, $B$ is an SBS. 
As shown in~\cite{AAAI2022}, we can efficiently estimate the value of $\rho$ for a given set $B$ using a simple Monte Carlo test and still guarantee the accuracy of the resulting estimates due to the fact that the random variable $\xi_B$ associated with a $\rho$-backdoor $B$ is a Bernoulli random variable and, thus, it has variance $Var(\xi_B) \leq 1/4$.

% Both problems, finding a decomposition set which provides a minimal d-hardness value, and finding a $\rho$-backdoor with good properties, are reduced to minimization of pseudo-Boolean fitness functions, and metaheuristic optimization algorithms are used to minimize these functions. 
The problem of finding a decomposition set which provides a minimal d-hardness value is reduced to minimization of a pseudo-Boolean fitness function, and metaheuristic optimization algorithms are used to minimize it. 
In this article, we used for this a modified version of the program tool EvoGuess~\cite{evostar}, which was previously used to build SAT-based guess-and-determine attacks for a number of keystream generators.

This article significantly expands the results of~\cite{CP2021}: we proposed several techniques that significantly reduce the search efficiency; in the new experiments, we used search space reduction heuristics; we also demonstrated the possibility of using constructed decomposition sets to reduce the time needed for checking the proofs for hard unsatisfiable formulas; finally, we performed a more accurate experimental analysis, in which all experiments were repeated independently at least ten times.

\section{Conclusion}
\label{sec:conclusion}

In this paper, we presented an approach to estimating the hardness of Boolean formulas in the context of the Boolean satisfiabililty problem (SAT). 
The proposed hardness measure, d-hardness, is based on the fact that it is possible to estimate the running time of a complete SAT solving algorithm $A$ w.r.t. some subset $B$ of variables of the original formula. 
For this purpose, first, the decomposition of the formula is constructed using all possible assignments of variables from $B$. 
If we use a deterministic complete SAT solving algorithm $A$, then we can associate with such a decomposition a special random variable with finite expected value and variance. 
It is shown in the paper that the running time of $A$ on the decomposition of $C$ induced by $B$ can be expressed via the expected value (expectation) of the introduced random variable. 
For a specific set $B$, we can construct the estimation of this expectation using the Monte Carlo method. 
Finding the set $B$ for which the corresponding estimation is minimal is of particular interest. 
This problem can be formulated in the framework of pseudo-Boolean optimization: for this purpose, we construct a special fitness function and consider the problem of its minimization over some finite search space which is essentially a Boolean hypercube.

To conlude, we have proposed several important modifications of the algorithm for d-hardness estimation previously presented in~\cite{CP2021}.
In particular, we drew a connection of d-hardness with the $\rho$-backdoors concept~\cite{AAAI2022}, and proposed an optimization of the d-hardness estimation algorithm that is based on the use of the concept of $\rho$-backdoors.
We also showed how to use an arbitrary set of variables $B$ associated with the found d-hardness estimation to build a decomposed unsatisfiability proof, and demonstrated that these proofs can be checked much faster than the monolithic proof for the original SAT formula.

%%%%%%%%%%%%%%%%%%%%%%%%%%%%%%%%%%%%%%%%%%%%%%%%%%%%%%%

\subsubsection*{Acknowledgment}

This work was supported by the Analytical Center for the Government of the Russian Federation (IGK~000000D730321P5Q0002), agreement No.~70-2021-00141.

%%%%%%%%%%%%%%%%%%%%%%%%%%%%%%%%%%%%%%%%%%%%%%%%%%%%%%%
\bibliographystyle{plain}
\bibliography{paper}

%%%%%%%%%%%%%%%%%%%%%%%%%%%%%%%%%%%%%%%%%%%%%%%%%%%%%%%

\end{document}